\documentclass{article}

\usepackage{amsmath,amsthm,amssymb}
\usepackage{mathtools,bbm,xspace}
\usepackage{thmtools}
\usepackage{thm-restate, natbib}

\DeclareMathOperator{\sign}{sign}
\newcommand{\E}{\mathbb{E}}
\newcommand{\cA}{\mathcal{A}}

\newcommand{\one}{\mathbf{1}}

\newcommand{\R}{\mathbb{R}}

\renewcommand{\Pr}{\mathbb{P}}

\newcommand{\cX}{\mathcal{X}}
\newcommand{\cD}{\mathcal{D}}
\newcommand{\cH}{\mathcal{H}}
\newcommand{\rx}{\mathbf{x}}

\newcommand{\rY}{\mathbf{Y}}
\newcommand{\rZ}{\mathbf{Z}}
\newcommand{\rR}{\mathbf{R}}
\newcommand{\rS}{\mathbf{S}}

\renewcommand{\rx}{x}

\renewcommand{\rY}{Y}
\renewcommand{\rZ}{Z}
\renewcommand{\rR}{R}
\renewcommand{\rS}{S}
\DeclareMathOperator{\er}{er}

\newcommand{\beq}{\begin{eqnarray*}}
\newcommand{\eeq}{\end{eqnarray*}}
\newcommand{\beqn}{\begin{eqnarray}}
\newcommand{\eeqn}{\end{eqnarray}}

\makeatletter
\newcommand{\alphacmd@factory}[1]{}
\newcounter{alphacmdcounter}
\newcommand{\GenerateAlphabetCmds}[2]{%
    \renewcommand{\alphacmd@factory}[1]{%
        \expandafter\providecommand\csname #1##1\endcsname{{#2{##1}}}%
    }
    \setcounter{alphacmdcounter}{0}
    \loop
        \stepcounter{alphacmdcounter}
        \edef\alphacmd@ID{\@Alph\c@alphacmdcounter}
        \expandafter\alphacmd@factory\alphacmd@ID
    \ifnum\thealphacmdcounter<26
    \repeat
}
\newcommand{\GenerateAlphabetCmdsLower}[2]{%
    \renewcommand{\alphacmd@factory}[1]{%
        \expandafter\providecommand\csname #1##1\endcsname{{#2{##1}}}%
    }
    \setcounter{alphacmdcounter}{0}
    \loop
        \stepcounter{alphacmdcounter}
        \edef\alphacmd@ID{\@alph\c@alphacmdcounter}
        \expandafter\alphacmd@factory\alphacmd@ID
    \ifnum\thealphacmdcounter<26
    \repeat
}
\makeatother

\GenerateAlphabetCmds{c}{\mathcal}
\GenerateAlphabetCmdsLower{c}{\mathcal}

\GenerateAlphabetCmds{r}{\mathbf}
\GenerateAlphabetCmdsLower{r}{\mathbf}

\newcounter{savedequation}

\usepackage{microtype}
\usepackage{graphicx}
\usepackage{subcaption}
\usepackage{booktabs} 

\usepackage{hyperref}

\usepackage{fullpage}

\usepackage{amsmath}
\usepackage{amssymb}
\usepackage{mathtools}
\usepackage{amsthm}

\usepackage[capitalize,noabbrev]{cleveref}

\theoremstyle{plain}
\newtheorem{theorem}{Theorem}[section]

\newtheorem{lemma}[theorem]{Lemma}

\newtheorem{corollary}[theorem]{Corollary}

\theoremstyle{definition}
\theoremstyle{remark}
\newtheorem{remark}[theorem]{Remark}

\usepackage[textsize=tiny]{todonotes}

\begin{document}

  \title{A Fine-Grained Understanding of Uniform Convergence for Halfspaces}
  
\author{Aryeh Kontorovich\\Ben-Gurion University\\\texttt{karyeh@bgu.ac.il} \and Kasper Green Larsen\\Aarhus University\\\texttt{larsen@cs.au.dk}}

    \date{}
\maketitle

\begin{abstract}
We study the fine-grained uniform convergence behavior of halfspaces beyond worst-case VC bounds. 
For inhomogeneous halfspaces in $\mathbb{R}^d$ with $d\ge 2$, we show that standard first-order VC bounds are essentially tight: even consistent hypotheses can incur population error $\Theta(d\ln(n/d)/n)$, and in the agnostic setting the deviation scales as $\sqrt{\tau\ln(1/\tau)}$ at true error $\tau$. 
In contrast, homogeneous halfspaces in $\mathbb{R}^2$ exhibit a markedly different behavior. 
In the realizable case, every hypothesis consistent with the sample has error $O(1/n)$. 
In the agnostic case, we prove a bandwise, log-free deviation bound on each dyadic risk band via a critical-wedge localization argument. 
Unioning over bands incurs only a $\ln\ln n$ overhead, and we establish a matching lower bound showing this overhead is unavoidable. 
Together, these results give a fine-grained and nearly complete picture of uniform convergence for halfspaces, revealing sharp dimensional and structural thresholds.
\end{abstract}

\section{Introduction}
Linear models are arguably among the most fundamental learning models and understanding their capabilities and limitations has inspired countless influential theoretical and practical ideas. One of the earliest examples of a learning algorithm is indeed the Perceptron algorithm~\cite{perceptron} for computing a linear model for binary classification.
For binary classification with labels $\{-1,1\}$, a linear model is specified by a halfspace $h_{w,b}(x)=\sign(w^Tx+b)$ where $w \in \R^d$ is a normal vector for the separating hyperplane and $b \in \R$ is the bias.

Understanding the generalization performance of halfspaces in the PAC learning setup of~\citet{valiant1984learnable} is a core research topic in learning theory.
Here there is an unknown target halfspace $h^\star : \R^d \to \{-1,1\}$ and an unknown data distribution $\cD$ over $\R^d$. A training set $\rS$ is obtained as $n$ i.i.d.~samples from $\cD$, each labeled by $h^\star$, i.e.~$\rS = (\rx_1,h^\star(\rx_1)),\dots,(\rx_n,h^\star(\rx_n))$ with $\rx_i \sim \cD$. The goal is to argue that the error on the training set $\rS$ for every halfspace $h$, defined as $\er_\rS(h) = |\{i : h(\rx_i) \neq h^\star(\rx_i)\}|/n$, is close to the true error under the distribution $\cD$ given by $\er_\cD(h) = \Pr_{\rx \sim \cD}(h(\rx) \neq h^\star(\rx))$. If one can give such a guarantee, then this justifies Empirical Risk Minimization where one uses the training data $\rS$ to find a halfspace $h$ with smallest $\er_{\rS}(h)$. Arguing that all halfspaces $h$ have a small gap between $\er_\rS(h)$ and $\er_\cD(h)$ is often referred to as \emph{uniform convergence}, i.e.\ with enough training data, the performance of every halfspace on the training data $\rS$ approaches that under the full distribution $\cD$.

A classic approach to proving uniform convergence is to use the concept of VC-dimension~\cite{vapnik1971uniform}.
The VC-dimension of a hypothesis set $\cH \subseteq \{-1,1\}^\cX$ for an input domain $\cX$, is the largest $d$, such that there exists $d$ points $X=\{x_1,\dots,x_d\} \subset \cX$ where every labeling $y : X \to \{-1,1\}$ can be realized  by a hypothesis $h \in \cH$ ($h(x_i)=y(x_i)$ for all $i$). The VC-dimension of halfspaces in $\R^d$ is $d+1$. This bound allows one to use general uniform convergence results for hypothesis sets of VC-dimension $d+1$. Concretely, the following result gives a general upper bound on uniform convergence
\begin{theorem}[Uniform Convergence for VC-Classes, derived from~\cite{LiLongSrinivasan2001}]
\label{thm:ERM}
There is a constant $c>0$ such that for any input domain $\cX$, integer $d \geq 1$, hypothesis set $\cH$ of VC-dimension $d$, distribution $\cD$ over $\cX \times \{-1,1\}$ any $0 < \delta < 1/2$ and number of samples $n \geq c(d+\ln(1/\delta))$, it holds with probability at least $1-\delta$ over a sample $\rS \sim \cD^n$ that every hypothesis $h \in \cH$ satisfies
\begin{align*}
|\er_{\cD}(h)-\er_{\rS}(h)| \leq
c  \left(\sqrt{\frac{\er_{\rS}(h)(d \ln(\tfrac{e}{\er_{\rS}(h)}) +
\ln(\tfrac{1}{\delta}))}{n}} + \frac{d \ln(\tfrac{n}{d})+\ln(\tfrac{1}{\delta})}{n}\right).
\end{align*}
\end{theorem}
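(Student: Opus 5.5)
The plan is to derive the statement from the relative-deviation bound of \citet{LiLongSrinivasan2001}, applied to the loss class $\{(x,y)\mapsto \one[h(x)\neq y] : h\in\cH\}$, whose VC-dimension is at most $d$. That result says there is a constant $c_0$ such that, with probability at least $1-\delta$, every $h\in\cH$ satisfies
\[
d_\nu\bigl(\er_\rS(h),\er_\cD(h)\bigr) \le \alpha, \qquad d_\nu(r,s)=\frac{|r-s|}{r+s+\nu},
\]
whenever $n \ge \frac{c_0}{\alpha^2\nu}\left(d\ln\frac{1}{\nu}+\ln\frac{1}{\delta}\right)$.

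\textbf{Step 1 (choice of parameters).} Fix $h$ and write $\hat p=\er_\rS(h)$, $p=\er_\cD(h)$. We do not fix a single $\nu$. Instead, for each dyadic scale $\nu_k=2^{-k}$ with $\nu_k\gtrsim d/n$, set
\[
\alpha_k^2 = \frac{c_0\,(d\ln(1/\nu_k)+\ln(k^2/\delta))}{n\nu_k},
\]
and invoke the bound at confidence $\delta/k^2$. The $k^2$ is absorbed into the logarithm because $\ln k\lesssim \ln(1/\nu_k)$. A union bound over $k$ then gives the $d_{\nu_k}$ guarantee simultaneously for all relevant scales.

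\textbf{Step 2 (choosing the scale for a given $h$).} Take $\nu_k$ to be the dyadic scale closest to $\max\{\hat p,\ \varepsilon_0\}$, where $\varepsilon_0 = C(d\ln(n/d)+\ln(1/\delta))/n$.

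\emph{Case $\hat p\ge\varepsilon_0$.} Here $\nu\approx\hat p$, so $\alpha^2 \approx (d\ln(e/\hat p)+\ln(1/\delta))/(n\hat p)$, which is a small constant by the choice of $C$. From $|p-\hat p|\le \alpha(p+\hat p+\nu)$ we solve for $p$. Since $\alpha<1/2$, this first gives $p\le 4\hat p$. Substituting back yields
\[
|p-\hat p|\lesssim \alpha\hat p=\sqrt{\hat p\,(d\ln(e/\hat p)+\ln(1/\delta))/n},
\]
which is the first term of the stated bound.

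\emph{Case $\hat p<\varepsilon_0$.} Take $\nu=\varepsilon_0$. Then $\ln(1/\nu)\lesssim \ln(n/d)$, using $n\ge c(d+\ln(1/\delta))$, and $\alpha$ is a small constant. Solving the same inequality gives $p\lesssim \varepsilon_0$, which is the second term.

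\textbf{Main obstacle.} The delicate part is the bookkeeping of logarithms. We need $\ln(1/\nu)$ at the chosen scale to become $\ln(e/\hat p)$ in the first case, but $\ln(n/d)$ in the second. We also need the union over scales to cost only an additive $\ln\ln$ term, which is dominated by $d\ln(e/\hat p)$ or $d\ln(n/d)$. I would handle this by restricting to scales $\nu_k\ge d/n$, so there are only $O(\ln(n/d))$ of them. If the Li--Long--Srinivasan bound is available directly in the stated form, with $\nu$ chosen in terms of $\hat p$, this step collapses, and only the inversion of the $d_\nu$ inequality remains. The assumption $n\ge c(d+\ln(1/\delta))$ is used to keep $\varepsilon_0\le 1$ and $\alpha$ bounded.
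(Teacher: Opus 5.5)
Your derivation is correct and takes the route the paper indicates. The paper gives no formal proof: it cites Li--Long--Srinivasan and remarks in the introduction that one applies their relative-deviation bound on each dyadic scale $2^{-i}$ with confidence $\delta_i\approx\delta/(i+1)^2$, absorbing $\ln(i+1)$ into $d\ln 2^{i}$, which is exactly your Step~1. Choosing the scale from $\er_{\rS}(h)$ rather than $\er_{\cD}(h)$ is legitimate because the union bound is taken over all scales first.

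Only cosmetic fixes are needed. Use confidence $\delta/(2k^2)$ so the failure probabilities sum to at most $\delta$. Also, the scales you actually use are $\nu_k\gtrsim\varepsilon_0$, not $\nu_k\ge d/n$; near $d/n$ one has $\alpha_k>1$, but there $d_{\nu_k}\le 1\le\alpha_k$ holds trivially, so nothing breaks.
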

Note that the result in Theorem~\ref{thm:ERM} is more involved than the often quoted and classic $|\er_\cD(h)-\er_{\rS}(h)| \leq c \sqrt{(d+\ln(1/\delta))/n}$ bound~\cite{blumer1989learnability}.
The key difference is that the result in Theorem~\ref{thm:ERM} improves for hypotheses $h$ with small $\er_{\rS}(h)$. In the extreme case where $h$ perfectly classifies the training set $\rS$ ($\er_\rS(h)=0$), the bound in Theorem~\ref{thm:ERM} simplifies to $c(d\ln(n/d) + \ln(1/\delta))/n$ which is a near-quadratic improvement over the vanilla $c\sqrt{(d+\ln(1/\delta))/n}$ bound. Bounds of the form in Theorem~\ref{thm:ERM} are known as \emph{first-order} bounds.

Examining Theorem~\ref{thm:ERM}, we observe that it applies to \emph{any} hypothesis set $\cH$ of VC-dimension $d$. Halfspaces in $\R^{d-1}$ is one example of such a hypothesis set, but it is not a priori clear that the bound in Theorem~\ref{thm:ERM} is tight for halfspaces. The terms involving $\ln(1/\delta)$ are known to be tight regardless of the hypothesis set $\cH$, but what about the remaining terms?

Quite recently,~\citet{DBLP:conf/focs/HannekeLZ24} showed that the bound in Theorem~\ref{thm:ERM} is \emph{tight} for \emph{some} hypothesis sets of VC-dimension $d$.
Concretely they proved tightness of Theorem~\ref{thm:ERM} for the hypothesis set over a finite domain $\cX$ consisting of, for every $S \subseteq \cX$ with $|S| \leq d$, the hypothesis $h_S$ assigning labels $-1$ to points in $S$ and $+1$ elsewhere. It does not seem possible to choose a finite subset $\cX$ of $\R^{d-1}$ such that halfspaces can generate all labelings with up to $d$ points labeled $-1$ (when $|\cX|$ is large enough). So we cannot immediately replicate that result. Moreover, the strongest lower bound that holds for \emph{all} hypothesis sets of VC-dimension $d$ states that with constant probability, there is a hypothesis $h$ with $|\er_{\cD}(h)-\er_{\rS}(h)| \geq c (\sqrt{\er_\rS(h) d/n} + d/n)$~\cite{devroye1996probabilistic}~[Chapter 14]. That is, the $\ln(1/\er_{\rS}(h))$ does not appear in the lower bound and neither does the $\ln(n/d)$ term. 

In this work, we study precisely this question for halfspaces, i.e.\ what are the exact uniform convergence guarantees for halfspaces? It turns out that the answer is not so simple and an interesting picture emerges with several surprising theoretical insights.

\subsection{Our Contributions}
Our first main contribution is to show that Theorem~\ref{thm:ERM} indeed gives a tight characterization of uniform convergence for halfspaces. However, this comes with a small caveat. In more detail, a halfspace $\sign(w^Tx + b)$ is referred to as homogeneous if $b=0$ and otherwise inhomogeneous. We now have that for inhomogeneous halfspaces in $d \geq 2$ dimensions, we get a tight characterization from Theorem~\ref{thm:ERM} as demonstrated by the following two theorems.
\begin{theorem}
\label{thm:realizableLB}
There is a constant $c>0$ such that for any $d \geq 2$ and any $n >
d$, there is a distribution $\cD$ over $\R^d$ and an inhomogeneous
halfspace $h^\star$ such that with probability at least $c$ over a
training set $\rS \sim \cD^n$ labeled by $h^\star$, it holds that there is an inhomogeneous halfspace $h$ that is consistent on the training set, i.e.\ $h(x) = h^\star(x)$ for all $x \in \rS$, but with
$
    \er_{\cD}(h) \geq \frac{c d \ln(n/d)}{n}.
$
\end{theorem}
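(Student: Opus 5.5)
The plan is to take the target to be the constant labeling $h^\star \equiv +1$, realized by the inhomogeneous halfspace with $w=0$ and $b=1$. We then put $\cD$ on the moment curve. With this target, a hypothesis $h$ is consistent exactly when its negative region contains no sample point, and $\er_\cD(h)$ is the $\cD$-mass of that negative region. So it suffices to show that, with constant probability, some inhomogeneous halfspace has a negative region of mass $\Omega(d\ln(n/d)/n)$ that avoids the sample.

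\textbf{Construction.} Let $\gamma(t) = (t,t^2,\dots,t^d)$ and let $\cD$ be the law of $\gamma(U)$ with $U$ uniform on $[0,1]$. For a halfspace $h_{w,b}$ we have $w^T\gamma(t)+b = p(t)$, where $p$ ranges over all real polynomials of degree at most $d$. Set $k=\lfloor d/2\rfloor \ge 1$ and split $[0,1]$ into $k$ blocks $B_1,\dots,B_k$, each of length $1/k$. Given disjoint open intervals $(a_i,b_i)\subset B_i$, define
\[
p(t) = \prod_{i=1}^k (t-a_i)(t-b_i).
\]
This polynomial has degree $2k\le d$. It is negative exactly on $\bigcup_i (a_i,b_i)$, because the intervals are disjoint and each quadratic factor is negative only on its own interval. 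Hence $h=\sign(p(\cdot))$ is an inhomogeneous halfspace whose negative region is $\gamma(\bigcup_i(a_i,b_i))$. If we take $(a_i,b_i)$ to be a slightly shrunk version of the largest gap between consecutive sample parameters inside $B_i$, then $h$ is consistent, sidestepping any $\sign(0)$ issue at sample points. Its error is the total length of these gaps.

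\textbf{Probabilistic step.} We need that, with constant probability, a constant fraction of the blocks contain an empty gap of length at least $c\ln(n/k)/n$. Each block receives $\mathrm{Bin}(n,1/k)$ samples, so with $m=n/k$ a Chernoff bound gives that most blocks contain between $m/2$ and $2m$ points. Conditioned on the count, the points in a block are i.i.d.\ uniform on the block. The classical maximal-spacing estimate then applies: for $m'\asymp m$ uniform points in an interval of length $1/k$, the largest spacing is at least $\tfrac{\ln m'}{2m'}\cdot\tfrac1k$ with probability at least $1/2$ once $m$ exceeds a large constant. This can be seen by splitting the block into about $m'/\ln m'$ cells and noting that each is empty with probability about $m'^{-1/2}$. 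For blocks with constantly many points, the trivial bound of a gap of length about $1/(km')$ suffices.

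The expected number of successful blocks is then at least $k/4$. Since this count is at most $k$, a reverse Markov argument shows that with constant probability at least $k/8$ blocks succeed; this avoids the need for independence across blocks. On that event the total error is at least
\[
\tfrac{k}{8}\cdot c\,\tfrac{\ln(n/k)}{n}=\Omega\!\left(\tfrac{d\ln(n/d)}{n}\right).
\]
In the regime $d<n\le Cd$, the quantity $\ln(n/d)$ is $O(1)$. There it suffices to exhibit error $\Omega(d/n)$, which follows from the pigeonhole principle: among the $k$ blocks, at least a constant fraction contain a gap of length at least about $1/n$.

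\textbf{Main obstacle.} The one delicate point is the maximal-spacing lower bound with explicit constants, valid uniformly in $m$ above a threshold, together with making it hold simultaneously for a constant fraction of the blocks. I would handle this with the cell-emptiness argument above: the number of empty cells has mean $\Omega(m^{1/2}/\ln m)$, and the Paley–Zygmund inequality, applied using negative association of the cell occupancies, shows it is positive with probability at least $1/2$. Combining this with the reverse-Markov count over blocks completes the proof.
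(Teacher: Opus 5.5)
Your proof is correct, and it takes a genuinely different route from the paper.

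\textbf{The paper's route.} The paper uses a finite support: $d/2$ groups of $k\approx 8n/(d\ln(n/d))$ atoms, each group evenly spaced on a circle in its own pair of coordinates. A bespoke realizability lemma (Lemma~\ref{lem:support}) shows that an inhomogeneous halfspace can flip any single atom in each group. The probabilistic step is a coupon-collector estimate:
\begin{itemize}
\item each atom is missed with probability at least $\sqrt{d/n}$;
\item miss indicators within a group are negatively correlated, so Lemma~\ref{lem:anti} gives a missed atom in each group with probability at least $1/8$;
\item Markov's inequality over the groups gives $\Omega(d)$ flippable missed atoms, i.e.\ error $\Omega(1/k)$.
\end{itemize}

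\textbf{Your route and how it relates.} You put a continuous law on the moment curve. The linearization of polynomials of degree at most $d$ then gives you a stronger geometric fact for free: any $\lfloor d/2\rfloor$ disjoint intervals can be carved out. The $\ln(n/d)$ factor appears as the classical $\ln m/m$ maximal-spacing law in each block. Your cell-emptiness argument is essentially the paper's discretization in disguise. Your cells of relative length $\ln m'/(2m')$, each empty with probability about $m'^{-1/2}$, play exactly the role of the paper's atoms, each missed with probability about $\sqrt{d/n}$.

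\textbf{What each approach buys.} Working in the continuum costs you bookkeeping that the paper avoids. You need a Chernoff bound and conditioning for the random block counts, plus a separate deterministic pigeonhole argument for $d<n\le Cd$. The paper's choice of $k$ instead makes $\E[\rY_i]\ge 1$ for every $n>d$ in one stroke. In exchange, you need no custom realizability lemma, and odd $d$ is handled directly. The paper's finite support, for its part, is reused verbatim for the agnostic lower bound (Theorem~\ref{thm:agnosticLB}), where binomial counts of individual atoms are convenient.

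\textbf{One small wording fix.} Take the largest of the $N_i+1$ pieces into which the sample cuts $B_i$, including the two end pieces. The empty cell you locate may lie next to the block boundary, or the block may contain fewer than two points. Shrinking that piece to stay strictly inside $B_i$ keeps the intervals disjoint, as you intended.
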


\begin{theorem}
  \label{thm:agnosticLB}
There is a constant $c>0$ such that for any $d \geq 2$, any $c^{-1} d
\ln(n/d)/n < \tau < c$ and any $n > c^{-1}d$, there is a distribution
$\cD$ over $\R^d$ and an inhomogeneous halfspace $h^\star$ such that
with probability at least $c$ over a training set $\rS \sim \cD^n$
labeled by $h^\star$, it holds that there is an inhomogeneous halfspace $h$ with $\er_{\cD}(h)= \tau, \tau/2 \leq \er_S(h) \leq 2\tau$, but with
  \[
    \er_{\cD}(h) - \er_{\rS}(h) \geq c \sqrt{\frac{\er_S(h)d \ln(e/\er_S(h)) }{n}}.
\]
\end{theorem}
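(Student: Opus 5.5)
We will prove Theorem~\ref{thm:agnosticLB} by embedding a hard instance into points in convex position, where inhomogeneous halfspaces behave like a rich class, and then reducing to a lower bound on the maximum of many nearly independent binomial deviations. The starting point is that halfspaces in $\R^2$ can cut off any arc, meaning any set of consecutive points, from points on a circle or on the moment curve. In $\R^d$, points on the moment curve $t\mapsto (t,t^2,\dots,t^d)$ have the following property: any union of at most $\lfloor d/2\rfloor$ disjoint arcs can be cut off exactly by a halfspace. This holds because a degree-$d$ polynomial can be chosen to change sign at the endpoints of those arcs, and that polynomial is then a linear function of the lifted coordinates.

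\textbf{Construction.} Fix $k=\Theta(d)$ arcs, with $k\le \lfloor d/2\rfloor$ (for $d=2,3$ take $k=1$). Let $\cD$ be uniform-ish on $m$ points placed along the moment curve, and let $h^\star$ label every point $+1$, which is a halfspace. Each point gets mass $p$. A hypothesis $h$ cutting off $k$ arcs with $\ell$ points each has $\er_\cD(h)=k\ell p$ and $\er_\rS(h)$ equal to the empirical mass of those arcs. We want $\er_\cD(h)=\tau$, so each arc has mass $\tau/k$. We partition the curve into $N=k/\tau$ disjoint blocks, or rather $\Theta(1/\tau)$ candidate positions per arc, each of mass $\tau/k$. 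The empirical count in a block is $\mathrm{Bin}(n,\tau/k)$, with mean $\mu=n\tau/k\gtrsim \ln(n/d)$ by the assumption on $\tau$. The counts across blocks are negatively associated and nearly independent, which can be handled by Poissonization.

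\textbf{Key step.} We need that, with constant probability, at least $k$ blocks each have count at most $\mu-c\sqrt{\mu\ln(k/\tau)}$. A single block has this deficit with probability about $(k/\tau)^{-c'}$ by the reverse Chernoff (anti-concentration) bound for binomials, valid since $c\sqrt{\mu\ln(k/\tau)}\le \mu$ when $\mu\gtrsim\ln(1/\tau)$. Choosing $c$ small makes the expected number of such blocks at least $(k/\tau)^{1-c'}\ge 2k$. A second-moment or negative-association argument then gives at least $k$ such blocks with constant probability. Taking $h$ to cut off exactly these $k$ blocks gives $\er_\cD(h)=\tau$ and
\[
\er_\cD(h)-\er_\rS(h)\ \ge\ \tfrac{k}{n}\,c\sqrt{\tfrac{n\tau}{k}\ln(k/\tau)} = c\sqrt{\tfrac{\tau k\ln(k/\tau)}{n}}.
\]
Since $k=\Theta(d)$ and $\ln(k/\tau)\gtrsim\ln(e/\tau)$ for $\tau<c$, this is $\Omega(\sqrt{\tau d\ln(e/\tau)/n})$. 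The deviation is also at most $\tau/2$, because $\mu\gg\ln(1/\tau)$ makes the square root at most $\tau$ times a small constant. Hence $\tau/2\le\er_\rS(h)\le\tau$, and we may replace $\tau$ by $\er_\rS(h)$ up to constants.

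\textbf{Main obstacle.} The hardest part is rigorously justifying the reverse Chernoff tail together with the union-of-arcs realizability on the moment curve, which requires the blocks to be non-adjacent. To handle adjacency, we use only every other block, or equivalently allow arcs to merge, which only helps. We also need the constraint $k\le d/2$, which costs only a constant factor. For the anti-concentration we will use the standard bound $\Pr[\mathrm{Bin}(n,q)\le nq-t]\ge \exp(-O(t^2/(nq)))$ for $\sqrt{nq}\le t\le nq/2$, followed by Poissonization to get independence across blocks.
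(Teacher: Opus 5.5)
Your moment-curve construction is valid and genuinely different from the paper's. As written, however, the key step uses the wrong logarithm and one stated conclusion does not follow. Both are repairable.

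\textbf{The logarithm.} You ask for a per-block deficit of $c\sqrt{\mu\ln(k/\tau)}$, and this fails with a universal constant, for two reasons.
\begin{itemize}
\item Anti-concentration of the type in Lemma~\ref{lem:chern} needs $\mu\gtrsim\ln(k/\tau)$. The hypotheses only give $\mu=n\tau/k\gtrsim\ln(n/d)\ge\ln(1/\tau)$. Since $\ln(k/\tau)\approx\ln d+\ln(1/\tau)$ can be of order $\ln n\gg\ln(n/d)$ when $d$ is close to $n$, the justification ``valid since \dots when $\mu\gtrsim\ln(1/\tau)$'' is a non sequitur.
\item The claim $(k/\tau)^{1-c'}\ge 2k$ is equivalent to $\tau^{-(1-c')}\ge 2k^{c'}$, which fails for fixed $\tau$ once $d$ is large.
\end{itemize}
In fact your intermediate bound $\sqrt{\tau k\ln(k/\tau)/n}$ is false. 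At constant $\tau$ it is of order $\sqrt{d\ln d/n}$, which exceeds the upper bound of Theorem~\ref{thm:ERM} when, say, $n\ge d^2$.

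The reason is that you only need a $k/N=\tau$ fraction of the $N=k/\tau$ blocks to be deficient, so the right scale is $\ln(N/k)=\ln(1/\tau)$, not $\ln N$. So target a deficit of $c\sqrt{\mu\ln(1/\tau)}$ instead.
\begin{itemize}
\item Lemma~\ref{lem:chern} then gives per-block probability at least $\tau^{9c^2}$. Its side conditions hold since $c^2\ln(1/\tau)>3$ for $\tau$ below a constant and $\mu\gtrsim\ln(1/\tau)$.
\item The expected number of deficient blocks is at least $k\tau^{9c^2-1}\ge 2k$.
\item Lemma~\ref{lem:anti} then yields at least $k$ deficient blocks with probability at least $1/8$.
\item The total deficit is $c\sqrt{k\tau\ln(1/\tau)/n}$, which is exactly what the theorem needs.
\end{itemize}

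\textbf{The lower bound on $\er_\rS(h)$.} Your selection event only bounds the counts of the chosen blocks from above. A chosen block may contain far fewer than $\mu-c\sqrt{\mu\ln(1/\tau)}$ samples, even zero. So ``the deviation is at most $\tau/2$'' does not follow, and neither does $\er_\rS(h)\ge\tau/2$, which the theorem asserts.

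The displayed inequality with $\er_\rS(h)$ in place of $\tau$ does survive, because $x\ln(e/x)$ is increasing and $\er_\rS(h)\le\tau$. But you need a separate argument for $\er_\rS(h)\ge\tau/2$. One option is a union bound over the $\binom{N}{k}\le(e/\tau)^k$ unions of $k$ blocks. Each such union has count $\mathrm{Bin}(n,\tau)$, so the failure probability is at most $(e/\tau)^k e^{-n\tau/8}=e^{k(\ln(e/\tau)-\mu/8)}$. This is small because $\mu\gtrsim\ln(n/d)\gtrsim\ln(e/\tau)$ with a large constant. The paper instead invokes uniform convergence via Lemma~\ref{lem:withinconstants}.

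\textbf{Comparison of routes.} The paper uses a product construction.
\begin{itemize}
\item It takes $d/2$ orthogonal planes, each carrying $\Theta(1/\tau)$ equally spaced circle points.
\item A halfspace can flip one arbitrary point per plane (Lemma~\ref{lem:support}).
\item It finds a deficient point in a given plane with constant probability, then applies Markov across planes to get $\Theta(d)$ such planes.
\end{itemize}
You instead use neighborliness of the moment curve. Since $\sign\bigl(\prod_{j\le k}(t-a_j)(t-b_j)\bigr)$ is affine in $(t,\dots,t^d)$ for $2k\le d$, halfspaces restricted to your blocks realize \emph{every} $k$-subset of the $N$ blocks.

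Up to a factor $2$ in dimension, that is exactly the ``all small subsets'' class the introduction says seems hard to embed in halfspaces. So your route imports that lower-bound class directly. It also reduces the probabilistic part to a single ``$k$ of $N$ negatively associated binomials are deficient'' step, with no second layer of Markov.

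The paper's route is more explicit and reuses the same support construction for Theorem~\ref{thm:realizableLB}. Because each plane has only $\Theta(1/\tau)$ candidates, the correct logarithm $\ln(1/\tau)$ comes out automatically, which is precisely where your write-up slipped.
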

Observe how Theorem~\ref{thm:realizableLB} matches the upper bound in Theorem~\ref{thm:ERM} for $\er_{\rS}(h)=0$, i.e.\ when $h$ is consistent on the training set. The bound in Theorem~\ref{thm:agnosticLB} handles hypotheses with larger error and completely matches the guarantee in Theorem~\ref{thm:ERM}. Notice also that for $\tau < c^{-1}d\ln(n/d)/n$, the lower bound from Theorem~\ref{thm:realizableLB} matches Theorem~\ref{thm:ERM}. We remark that the proof of Theorem~\ref{thm:realizableLB} uses a construction very similar to prior work by~\citet{DBLP:journals/tcs/ZhivotovskiyH18} (see the proof of their Proposition 19).

A natural question is now whether the restriction to inhomogeneous halfspaces in $d \geq 2$ dimensions is necessary or just an artifact of our proof. It is easily seen that homogeneous halfspaces in $\R^d$ are at least as expressive as inhomogeneous halfspaces in $\R^{d-1}$ using the classic trick of hard-coding the bias into a special feature of value $1$ on all data points. But what happens for homogeneous halfspaces in $\R^2$? To our surprise, it turns out that the behavior deviates from the higher-dimensional cases.
\begin{theorem}
\label{thm:realizableUB}
For any distribution $\cD$ over $\R^2$, any homogeneous
halfspace $h^\star$ and any $0 < \delta < 1$, it holds with probability at least $1-\delta$ over a
training set $\rS \sim \cD^n$ labeled by $h^\star$ that every halfspace $h$ that is consistent on the training set, i.e.\ $h(x) = h^\star(x)$ for all $x \in \rS$, satisfies
$
    \er_{\cD}(h) \leq \frac{\ln(2/\delta)}{n}.
$
\end{theorem}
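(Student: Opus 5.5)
The plan is to exploit the fact that in $\R^2$ the disagreement regions of homogeneous halfspaces with $h^\star$ form \emph{two nested chains}. Write $h^\star = h_{w^\star,0}$ and parametrize every homogeneous $h$ by the angle of its normal vector relative to $w^\star$. For a rotation angle $\alpha \in [0,\pi]$, let $h_\alpha^+$ be the halfspace whose normal is $w^\star$ rotated counterclockwise by $\alpha$, and $h_\alpha^-$ the one rotated clockwise by $\alpha$. The first step is the elementary geometric claim that the disagreement set $A^+_\alpha = \{x : h_\alpha^+(x) \neq h^\star(x)\}$ is a double wedge: two opposite cones of opening angle $\alpha$, bounded by the two separating lines. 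Moreover, $A^+_\alpha \subseteq A^+_{\beta}$ whenever $\alpha \le \beta$, and similarly for $A^-_\alpha$. I would verify this nesting carefully, including the boundary rays and the origin under the fixed convention for $\sign(0)$. The only thing needed is monotonicity in $\alpha$, and the boundary conventions can be fixed consistently so that it holds. Every homogeneous $h$ equals some $h^+_\alpha$ or $h^-_\alpha$, so it suffices to handle each chain separately. (Here I read ``every halfspace $h$'' as every homogeneous halfspace, the class under consideration.)

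Next, fix $\varepsilon = \ln(2/\delta)/n$ and treat the counterclockwise chain; the other is symmetric. Let $I = \{\alpha : \Pr_\cD(A^+_\alpha) > \varepsilon\}$. If $I$ is empty there is nothing to prove. Otherwise define the ``critical'' set
\[
A^\ast = \bigcap_{\alpha \in I} A^+_\alpha .
\]
By monotonicity, $I$ is an up-interval, so $A^\ast$ is the intersection of a decreasing sequence $A^+_{\alpha_k}$ with $\alpha_k \downarrow \inf I$. Continuity of measure from above then gives $\Pr_\cD(A^\ast) \ge \varepsilon$. Furthermore, every $h^+_\alpha$ with error exceeding $\varepsilon$ has $\alpha \in I$, hence $A^+_\alpha \supseteq A^\ast$. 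If such an $h$ is consistent with $\rS$, then no sample point lies in $A^+_\alpha$, and in particular none lies in $A^\ast$. The probability of that event is at most $(1-\varepsilon)^n \le e^{-\varepsilon n} = \delta/2$.

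A union bound over the two chains then yields failure probability at most $\delta$, which gives exactly the bound $\ln(2/\delta)/n$ claimed in Theorem~\ref{thm:realizableUB}. The crux, and the only step needing care, is the first one: establishing that the disagreement regions are totally ordered within each rotation direction. This total ordering reduces the uniform statement over infinitely many hypotheses to a single ``bad event'' per direction, with no VC or covering overhead. I would also double-check that the limiting set $A^\ast$ is handled correctly when $\cD$ has atoms on rays. This is why I use the intersection-plus-continuity argument rather than picking a single $\alpha$ with $\Pr(A^+_\alpha) = \varepsilon$, which may not exist.
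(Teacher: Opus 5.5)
Your architecture is the paper's: split the homogeneous halfspaces into a counterclockwise and a clockwise chain of disagreement sets, extract one critical set of mass at least $\varepsilon$ per chain, bound the probability of missing it by $(1-\varepsilon)^n$, and union bound over the two chains. Your intersection-plus-continuity-from-above definition of $A^\ast$ is a slightly more robust version of the paper's infimum $t_\varepsilon$, which relies on right-continuity of $t\mapsto G_t$.

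However, the step you single out as the crux is false as you state it. Under any fixed value of $\sign(0)$, nesting breaks at the endpoint $\alpha=\pi$. The hypothesis with normal $-w^\star$ outputs $\sign(0)$ on the whole line $\{x: w^{\star\top}x=0\}$, exactly like $h^\star$. In contrast, for every $\alpha\in(0,\pi)$ the hypothesis $h^+_\alpha$ disagrees with $h^\star$ on one of the two open rays of that line, and $h^-_\alpha$ on the other. So $A^+_\alpha\not\subseteq A^+_\pi$. When $\pi\in I$, your $A^\ast$ loses that ray, so $\Pr(A^\ast)\ge\varepsilon$ no longer follows; $A^\ast$ can even be empty.

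This is not cosmetic. Take $\sign(0)=+1$ and $h^\star(x)=\sign(x_2)$. Put mass $\varepsilon'>\varepsilon$ on each of $(1,0)$, $(-1,0)$, $(0,-1)$ and the remaining mass at the origin. If $S$ misses one of these three points, the normal $(-1,1)$, $(1,1)$ or $(0,-1)$, respectively, gives a consistent hypothesis of error $\varepsilon'$. The failure probability is therefore $1-\Pr(S\text{ hits all three})$. As $n\to\infty$ and $\varepsilon'\downarrow\varepsilon=\ln(2/\delta)/n$, this tends to $1-(1-\delta/2)^3$, which exceeds $\delta$ for every $\delta<3-\sqrt{5}$. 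So with a $\sign(0)$ convention the stated bound is false. Your method then honestly yields only $\ln(3/\delta)/n$, by treating $-w^\star$ as a third, one-element family.

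The missing idea is the paper's tie-breaking convention. Each hypothesis is $+1$ on the half-open arc $I_\alpha=(\alpha,\alpha+\pi]$, so exactly one boundary ray is positive and that choice rotates with $w$. Then $I_\pi$ is the exact complement of $I_0$, and the families $G_t$ and $H_t$ are nested (and right-continuous) for all $t\in(0,\pi]$. Under that convention your argument goes through with the constant $2$.
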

Notice how this is an improvement of a $\ln n$ factor over Theorem~\ref{thm:ERM} with $d=2$ (homogeneous halfspaces in $\R^d$ have VC-dimension $d$). Next, for hypotheses with a non-zero $\er_{\rS}(h)$, we prove the following generalization upper bound
\begin{theorem}
\label{thm:agnosticUB}
There is a constant $c>0$ such that for any distribution $\cD$ over $\R^2 \times \{-1,1\}$, any dyadic risk band $(2^{-i}, 2^{-i+1}]$ with integer $i \geq 1$ and any $0 < \delta < 1/2$, it holds with probability at least $1-\delta$ over a
training set $\rS \sim \cD^n$ that every halfspace $h$ with $\er_{\cD}(h) \in (2^{-i}, 2^{-i+1}]$, satisfies
  \[
    \er_{\cD}(h)-\er_{\rS}(h) \leq c \cdot \left( \sqrt{\frac{\er_S(h)\ln(1/\delta)}{n}} + \frac{\ln(1/\delta)}{n}\right).
\]
\end{theorem}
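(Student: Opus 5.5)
The plan is to exploit the one-dimensional structure of homogeneous halfspaces in $\R^2$. Write $h_\theta(x)=\sign(\langle w_\theta,x\rangle)$ with $w_\theta=(\cos\theta,\sin\theta)$, and let $\ell_\theta(x,y)=\one[h_\theta(x)\neq y]$. For two angles $\theta,\varphi$ with $|\theta-\varphi|<\pi$, the functions $h_\theta$ and $h_\varphi$ disagree exactly on a double wedge $W(\theta,\varphi)$ (up to boundary rays). Hence
\[
\ell_\theta-\ell_\varphi=\one_{W^+(\theta,\varphi)}-\one_{W^-(\theta,\varphi)},
\]
where $W^\pm$ partition $W(\theta,\varphi)\times\{-1,1\}$ according to which of the two hypotheses is correct. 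Crucially, if $\varphi$ is fixed and $\theta$ moves monotonically away from $\varphi$, both $W^+$ and $W^-$ grow as nested families of sets. Write $\rho(\theta,\varphi)=\cD(W(\theta,\varphi))$ for the disagreement mass.

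\textbf{Localization.} Fix the band and set $\tau=2^{-i}$. Pick $\theta^\star$ with $\er_\cD(h_{\theta^\star})\le \inf_\theta \er_\cD(h_\theta)+\tau$; it need not be an exact minimizer. For every $\theta$ in the band $\Theta_\tau=\{\theta:\er_\cD(h_\theta)\in(\tau,2\tau]\}$, the triangle inequality gives $\rho(\theta,\theta^\star)\le \er_\cD(h_\theta)+\er_\cD(h_{\theta^\star})\le 5\tau$. Since $\rho(\cdot,\theta^\star)$ is nondecreasing as $\theta$ moves away from $\theta^\star$ in either direction (for deviations up to $\pi$), $\Theta_\tau$ lies inside a ``critical wedge'': an arc $A^+\cup A^-$ on the two sides of $\theta^\star$ along which the wedge mass is at most $5\tau$.

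For small $i$, say $\tau\ge 1/16$, the arc may wrap around the circle. In that case I will simply use a constant number of anchors, at which the wedge mass between consecutive anchors is at most $5\tau$, and apply the same argument from each anchor. This only costs a constant factor.

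\textbf{Maximal inequality on the critical wedge.} Decompose the deviation process $Z(\theta)=\er_\cD(h_\theta)-\er_\rS(h_\theta)$ as
\[
Z(\theta)=Z(\theta^\star)+(P-P_n)(\one_{W^+(\theta,\theta^\star)})-(P-P_n)(\one_{W^-(\theta,\theta^\star)}).
\]
The first term is a single Bernstein bound with variance at most $2\tau$. Each of the other two terms is, on one side $A^\pm$, the empirical process indexed by a nested increasing family of sets whose mass is at most $5\tau$. After a time change by mass, such a process is a centered binomial process $t\mapsto nP_n(B_t)-nP(B_t)$ with $P(B_t)\le 5\tau$. Its martingale structure, the exponential supermartingale $\exp(\lambda(\cdot))$ combined with Doob's maximal inequality, gives a Bernstein-type tail for the supremum. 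The result is
\[
\sup_t |(P-P_n)(B_t)|\le c\left(\sqrt{\tau\ln(1/\delta)/n}+\ln(1/\delta)/n\right)
\]
with probability $1-\delta/8$, with no chaining and no $\ln(1/\tau)$ factor. A union bound over the four nested families and the anchor then gives
\[
\sup_{\theta\in\Theta_\tau}Z(\theta)\le c\left(\sqrt{\tau\ln(1/\delta)/n}+\ln(1/\delta)/n\right).
\]

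\textbf{Converting to the stated form and the main difficulty.} Let $L=\ln(1/\delta)$. For $\theta\in\Theta_\tau$ we have $\tau<\er_\cD(h_\theta)$, so
\[
\er_\cD-\er_\rS\le c\sqrt{\er_\cD L/n}+cL/n.
\]
Viewing this as a quadratic inequality in $\sqrt{\er_\cD}$ yields $\er_\cD\le 2\er_\rS+c'L/n$. Substituting this back replaces $\er_\cD$ by $\er_\rS$ under the square root, at the price of an extra $O(L/n)$ term, which gives the theorem.

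I expect the main obstacle to be making the maximal inequality for the nested wedge families rigorous. Two points need care. First, ties and boundary rays where $\cD$ has atoms, which I will handle by randomized tie-breaking or by treating the family as right-continuous. Second, verifying that the exponential-martingale argument yields a Bernstein-type tail with variance proxy $\tau$ uniformly over the family, with no additional logarithmic factor.
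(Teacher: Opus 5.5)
Your argument is correct in outline, and its outer steps match the paper. The localization via monotonicity of the disagreement mass is the paper's Lemma~\ref{lem:critical-wedge-band}; the paper anchors at an arbitrary band member with radius $2^{-i+3}$ rather than at a near-minimizer with $5\tau$. Your quadratic-inequality conversion from $\tau$ to $\er_S(h)$ is a clean substitute for the paper's two-case argument.

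The real difference is how the supremum inside the wedge is controlled. The paper fixes the deterministic wedge $E$, of mass $p\lesssim 2^{-i}$, outside which every band hypothesis equals the anchor. It splits the error into the conditional error on $E$ and the anchor's error on $E^c$, and conditions on $N_E=|S\cap E|$. It then applies an off-the-shelf \emph{scale-one} VC bound to the conditional sample, which is log-free because the VC dimension is constant and no variance localization is needed at scale one. Finally it rescales by $p$ using $N_E\approx pn$. You instead write $\ell_\theta-\ell_{\theta^\star}=\one_{W^+}-\one_{W^-}$ with nested $W^\pm$. This reduces the band supremum to one Bernstein bound plus four chain-indexed empirical processes of mass $O(\tau)$, and you need a Bernstein-type maximal inequality for such chains.

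What each route buys:
\begin{itemize}
\item The paper's route uses only that all band disagreement regions lie in one set of mass $O(\tau)$, plus bounded VC dimension. It therefore needs no new probabilistic tool.
\item Your route avoids conditioning on $N_E$ and makes explicit that the missing $\ln(1/\tau)$ comes from the loss differences forming a union of chains. But it rests entirely on the chain maximal inequality.
\end{itemize}

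When you prove that inequality, note that after the time change $N(t)-nF(t)$ is not itself a martingale. It has mean-reverting drift $-\frac{N(t)-nF(t)}{1-F(t)}\,dF(t)$. The martingale is $(N(t)-nF(t))/(1-F(t))$. Apply Doob to its exponential on the part of the chain with $F\le 1/2$, where the normalization costs at most a factor $2$. Handle the rest by passing to complements, or by DKW, since in that regime $\tau>1/10$ is a constant.

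Also, the anchors are unnecessary for nestedness. Each direction from $\theta^\star$ up to deviation $\pi$ is already a chain, and together the two directions cover the circle. The only large-$\tau$ issue is chain mass approaching $1$, which the split above handles.
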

Note that in Theorem~\ref{thm:agnosticUB}, we focus on the general agnostic case where the distribution $\cD$ is over a point $x \in \R^2$ \emph{and} a label $y \in \{-1,1\}$ and $\er_{\cD}(h) := \Pr_{(x,y) \sim \cD}(h(x) \neq y)$. This is an improvement of a $\sqrt{\ln(e/\er_{\rS}(h))}$ over the general upper bound provided by Theorem~\ref{thm:ERM}.

Exploiting that the additive $\ln(1/\delta)/n$ term dominates when $2^{-i} \leq 1/n$. A union bound over the $\log_2n$ relevant dyadic intervals ($i \geq \log_2 n$) allows us to derive the following corollary
\begin{corollary}
\label{cor:agnosticUBDyadic}
There is a constant $c>0$ such that for any distribution $\cD$ over $\R^2 \times \{-1,1\}$ and any $0 < \delta < 1/2$, it holds with probability at least $1-\delta$ over a
training set $\rS \sim \cD^n$ that every halfspace $h$ satisfies
\begin{align*}
    \er_{\cD}(h)-\er_{\rS}(h) \leq 
    c \left(\sqrt{\frac{\er_{\rS}(h)(\ln(1/\delta) + \ln \ln n)}{n}} +
    \frac{\ln(1/\delta) + \ln \ln n}{n}\right).
\end{align*}
\end{corollary}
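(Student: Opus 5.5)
The corollary follows from Theorem~\ref{thm:agnosticUB} by a union bound over dyadic risk bands. We use the full confidence budget on a single catch-all band covering tiny risks, and split it evenly among the remaining $O(\log n)$ bands. Let $L = \lceil \log_2 n \rceil$.

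\textbf{Large-risk bands.} For each integer $1 \le i \le L$, apply Theorem~\ref{thm:agnosticUB} to the band $(2^{-i},2^{-i+1}]$ with confidence parameter $\delta_i = \delta/(2L)$. This is admissible since $\delta_i < 1/2$. With probability at least $1-\delta/2$, every halfspace $h$ with $\er_\cD(h) \in (2^{-L}, 1]$ lies in one of these bands and therefore satisfies
\[
\er_\cD(h) - \er_\rS(h) \le c\left(\sqrt{\frac{\er_\rS(h)\ln(2L/\delta)}{n}} + \frac{\ln(2L/\delta)}{n}\right).
\]
Since $\ln(2L/\delta) \le \ln(1/\delta) + \ln\ln n + O(1)$, and $\ln(1/\delta) \ge \ln 2$ absorbs the additive constant, this has the claimed form after enlarging $c$. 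For small $n$, say $n \le 16$, the right-hand side of the corollary already exceeds $1$ once $c$ is large, so we may assume $\ln\ln n \ge 1$.

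\textbf{Small-risk bands.} Every $h$ with $\er_\cD(h) \le 2^{-L} \le 1/n$ satisfies trivially
\[
\er_\cD(h) - \er_\rS(h) \le \er_\cD(h) \le \frac{1}{n} \le \frac{\ln(1/\delta)}{\ln 2 \cdot n}.
\]
This holds deterministically and needs no probabilistic statement. This is exactly the observation that the additive term dominates once $2^{-i} \le 1/n$.

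\textbf{Combining.} With probability at least $1-\delta/2 \ge 1-\delta$, every halfspace falls under one of the two cases above, and in each case the bound of the corollary holds. Because Theorem~\ref{thm:agnosticUB} is already stated in terms of $\er_\rS(h)$, there is no need to convert between empirical and true risk.

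The only delicate point is the bookkeeping of constants: absorbing $\ln 2$ and $\ln(1/\ln 2)$-type terms, and handling small $n$ where $\ln\ln n$ may be nonpositive. Both are dealt with by enlarging $c$ as described.
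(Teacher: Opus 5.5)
Your proof is correct and follows the paper's route: a union bound of Theorem~\ref{thm:agnosticUB} over the $\lceil \log_2 n\rceil$ dyadic bands, where the $\ln\ln n$ comes from the per-band confidence. You split $\delta$ evenly as $\delta/(2L)$ while the paper uses $\delta/(i(i+1))$, which is immaterial; the one real difference is the tail $\er_\cD(h)\le 1/n$, where the paper spends a separate budget $\delta_0$ rerunning the localization argument of Lemma~\ref{lem:band-logfree} (which also gives two-sided control there), whereas you correctly note that for this one-sided statement $\er_\cD(h)-\er_S(h)\le \er_\cD(h)\le 1/n$ holds deterministically, a legitimate simplification.
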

The additive $\ln \ln n$ terms look superfluous at first sight and could easily be suspected of resulting from a sub-optimal union bound. Indeed for Theorem~\ref{thm:ERM}, the authors prove Theorem~\ref{thm:ERM} only for $\er_{\cD}(h)$ in a dyadic interval $(2^{-i}, 2^{-i+1}]$ (with the right interpretation of their proof). However, in their case, they can union bound over all dyadic intervals using failure probabilities $\delta_i \approx \delta/(i + 1)^2$ since for $\er_{\cD}(h) \in (2^{-i},2^{-i+1}]$ and $\er_{\rS}(h) \approx \er_{\cD}(h)$ we see that 
\begin{align*}
\sqrt{\frac{2^{-i} (d \ln(2^i) + \ln(1/\delta_i))}{n}} &=
\sqrt{\frac{2^{-i} (d \ln(2^i) + \ln(1/\delta) + 2\ln(i+1))}{n}}.
\end{align*}
They key point is that $2\ln(i+1)$ is dominated by the $\ln(2^i)$ term and thus can be ignored. This gives the union bound for \emph{free}. However, for our  Theorem~\ref{thm:agnosticUB} we do not have an additive $\ln(1/\er_{\rS}(h))$ term to dominate the additive terms arising from the smaller choice of $\delta$ necessary for a union bound.

In our last contribution, we ask whether the additive $\ln \ln n$ from the union bound is strictly necessary. It turns out it is
\begin{theorem}
\label{thm:dyadic}
    There is a constant $c>0$ such that for $n > c^{-1}$, there is a distribution $\cD$ over $\R^2$ and a homogeneous halfspace $h^\star$ such that with probability at least $c$ over a training set $\rS \sim \cD^n$ labeled by $h^\star$, it holds that there is a homogeneous halfspace $h$ with $\er_{\cD}(h) \geq c\ln \ln(n)/n$ and
    \[
    \er_{\cD}(h) - \er_{\rS}(h) \geq c \sqrt{\frac{\er_{\rS}(h) \ln \ln n }{n}}.
    \]
\end{theorem}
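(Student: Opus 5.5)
We reduce the statement to a law-of-the-iterated-logarithm phenomenon for binomial counts at geometrically spaced scales. Homogeneous halfspaces in $\R^2$ restricted to the unit circle are half-circles. So place $\cD$ on the unit circle, and let $h^\star$ label the upper half-circle $+1$ and the lower half $-1$. Rotating the boundary of $h^\star$ by a small angle $\theta$ gives a halfspace $h_\theta$ whose disagreement region with $h^\star$ consists of two arcs of angular width $\theta$, one near each boundary point. We put essentially all the relevant mass near a single boundary point, say angle $0$, and use the atomless measure $\Pr(\text{angle} \in (0,\theta]) = \theta$ for $\theta$ up to a constant, with the remaining mass placed far from both boundary points. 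Then $\er_\cD(h_\theta) = \theta$ and $\er_\rS(h_\theta) = N(\theta)/n$, where $N(\theta)$ counts the sample points with angle in $(0,\theta]$. The process $\theta \mapsto N(\theta)$ is a binomial (empirical) process, so the theorem reduces to showing that, with constant probability, some $\theta \ge c\ln\ln n/n$ has
\[
\theta - N(\theta)/n \ge c\sqrt{\tfrac{N(\theta)}{n}\cdot\tfrac{\ln\ln n}{n}}.
\]

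Next we pick geometric scales $\theta_j = 2^{j}\cdot K\ln\ln n/n$ for $j = 1,\dots,m$, with $m \approx \tfrac12\log_2 n$ so that $\theta_m \le n^{-1/2}$ stays small. The number of scales is therefore $m = \Theta(\ln n)$. The key fact is that the increments $N(\theta_j)-N(\theta_{j-1})$ over the disjoint annuli $(\theta_{j-1},\theta_j]$ are nearly independent. After Poissonization they are exactly independent Poisson variables with means $\mu_j = n(\theta_j-\theta_{j-1}) = n\theta_j/2$. Since $\theta_j$ doubles at each step, the annulus $(\theta_{j-1},\theta_j]$ contributes half of the mass $n\theta_j$. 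For each $j$, consider the event
\[
E_j = \Bigl\{\, N(\theta_j)-N(\theta_{j-1}) \le \mu_j - a\sqrt{\mu_j \ln\ln n} \,\Bigr\}.
\]
The Poisson lower tail is tight in the moderate-deviation regime, and since $\mu_j \ge K\ln\ln n$ with $K$ large, we get $\Pr(E_j) \ge \exp(-C a^2 \ln\ln n) = (\ln n)^{-Ca^2}$. Choosing $a$ small makes this at least $(\ln n)^{-1/2}$. By independence, the probability that no $E_j$ occurs is at most $(1-(\ln n)^{-1/2})^{\Theta(\ln n)} = o(1)$. Hence with probability $1-o(1)$ some $E_j$ occurs. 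We then de-Poissonize in the standard way, by conditioning on the total count or coupling it with $n$, at the cost of a constant factor in probability.

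The step we expect to be the main obstacle is the following. On $E_j$ only the top annulus is depressed, and we still must control the inner part $N(\theta_{j-1})$, which has mean $n\theta_j/2$ and fluctuations of order $\sqrt{n\theta_j}$. These fluctuations are of the same order as the gain $a\sqrt{\mu_j\ln\ln n}$ up to the $\sqrt{\ln\ln n}$ factor, so they are genuinely smaller but are not independent of the choice of $j$. We plan to handle this by requiring jointly $E_j$ and $F_j = \{N(\theta_{j-1}) \le n\theta_{j-1} + \sqrt{n\theta_j}\}$. The event $F_j$ depends only on the inner annuli and has probability at least $1/2$. We then run the argument on the subsequence of even $j$, or on every $r$-th scale. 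A cleaner alternative is to bound $\Pr(\exists j: E_j \wedge F_j)$ below via the second-moment method, $\Pr(\sum_j \one_{E_j\wedge F_j} > 0) \ge (\E\sum)^2/\E(\sum)^2$, using the near-independence of the events across scales far apart.

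On $E_j \wedge F_j$ we have $N(\theta_j) \le n\theta_j - a'\sqrt{n\theta_j\ln\ln n}$. Since $N(\theta_j) \le n\theta_j$, this gives
\[
\theta_j - \frac{N(\theta_j)}{n} \ge a'\sqrt{\frac{\theta_j\ln\ln n}{n}} \ge a'\sqrt{\frac{\er_\rS(h_{\theta_j})\ln\ln n}{n}}.
\]
Moreover $\er_\cD(h_{\theta_j}) = \theta_j \ge K\ln\ln n/n$. Taking $h = h_{\theta_j}$ proves the claim with $c = \min(a',K,\cdot)$.
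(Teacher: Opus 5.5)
Your argument is correct. Its skeleton matches the paper's: geometric scales, one annulus depressed by $a\sqrt{\mu\ln\ln n}$ with probability $(\ln n)^{-O(a^2)}$, and $\Theta(\ln n)$ scales so that some depression occurs. The route differs in the key technical step.

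\textbf{How the paper handles the inner part.} The paper stays with the multinomial on $k=\Theta(n/\ln\ln n)$ equispaced atoms. It gets a depressed annulus via negative correlation and Paley--Zygmund (Lemma~\ref{lem:anti}). It then controls the inner part by a union bound over \emph{all} annuli at level $c_1\sqrt{\mu_j\ln\ln n}$. The extra $\sqrt{\ln\ln n}$ this costs is absorbed by taking the scale ratio $B$ to be a large constant, so that $\sum_{j<i}\sqrt{\mu_j}\le 2\sqrt{\mu_i/B}$.

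\textbf{Your version.} You keep ratio $2$, so the inner slack must be only $O(\sqrt{n\theta_j})$. This forces the constant-probability events $F_j$, which you must decouple from the random scale. Your second-moment route does this more easily than you expect. Write $A_j=E_j\wedge F_j$. After Poissonization, for $j<l$ the event $E_l$ is independent of $E_j\wedge F_j\wedge F_l$. Hence
\[
\Pr(A_j\wedge A_l)\le\Pr(E_j)\Pr(E_l)\le 4\Pr(A_j)\Pr(A_l).
\]
Since $\sum_j\Pr(A_j)\to\infty$, this gives $\Pr(\exists j:A_j)\ge 1/4-o(1)$, with no far-scale decorrelation needed.

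A simpler alternative: let $J$ be the largest $j$ for which $E_j$ holds. Then $\{J=j\}$ depends only on annuli $\ge j$, while $F_j$ depends only on annuli $<j$. Hence $\Pr(E_J\wedge F_J)\ge\tfrac12\Pr(\exists j:E_j)$. Passing to even $j$ alone would not remove the dependence, since $F_j$ still sees all inner annuli.

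\textbf{De-Poissonization.} Use the coupling rather than exact conditioning on the total count, which costs a $\sqrt n$ factor if done naively. Take intensity $n+C\sqrt n$. Then the $n$-sample is a sub-sample with probability close to $1$, so counts only decrease. The extra mean $C\sqrt n\,\theta_j$ is $o(\sqrt{n\theta_j\ln\ln n})$.

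\textbf{What each approach buys.} The paper avoids Poissonization and uses only crude union bounds, at the price of a large constant ratio $B$. Your version keeps the scales dyadic and makes $\er_\cD(h_\theta)=\theta$ exact via a continuous density. It relies on the Poisson model's exact independence to decouple the chosen scale from the inner counts.
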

To the best of our knowledge, this is the first time it has been shown that a simultaneous generalization guarantee over the different risk levels (values of $\er_{\cD}(h)$) provably is more expensive than a guarantee over just a single risk level.

\subsection{Related Work}

\paragraph{PAC learning algorithms, VC theory, and first-order uniform convergence.}
While uniform convergence gives a very strong \emph{for all} guarantee, i.e.~it bounds $|\er_\cD(h)-\er_{\rS}(h)|$ for every hypothesis $h \in \cH$, it is conceivable that concrete learning algorithms may generalize better than what is promised from uniform convergence. If we let $\cA(\rS)$ denote the hypothesis produced by a learning algorithm $\cA$ on training set $\rS$, and we let $h^\star \in \cH$ have smallest $\er_{\cD}(h)$ among all $h \in \cD$, then known lower bounds on the gap $|\er_\cD(h^\star) - \er_\cD(\cA(\rS))|$ only scale as $(d+\ln(1/\delta))/n$ and $\sqrt{\er_\cD(h^\star)(d + \ln(1/\delta))/n}$, i.e.~without a $\ln(n/d)$ and a $\sqrt{\ln(1/\er_\cD(h^\star))}$ factor~\cite{blumer1989learnability,EhrenfeuchtHKV89,devroye1996probabilistic}. Work by~\citet{Simon97} and a subsequent improvement by~\citet{Hanneke16} gave the first \emph{optimal} learning algorithm for realizable PAC learning ($\er_{\cD}(h^\star)=0$) whose error scales as $(d+\ln(1/\delta))/n$. This was later matched by several simpler and more natural algorithms~\cite{Larsen23,Aden-AliCSZ23, Aden-AliHLZ24,Moller25}. In the agnostic case (general $\er_{\cD}(h^\star)$), the recent algorithm by~\citet{DBLP:conf/focs/HannekeLZ24} gives the first optimal $\sqrt{\er_{\cD}(h^\star)(d + \ln(1/\delta))/n}$ bound provided that $\er_{\cD}(h^\star) \geq d \ln^9(n/d)/n$. Subsequent work by~\citet{AsilisHV25agnostic} gave optimal bounds in the $\er_{\cD}(h^\star) \leq d/n$ regime, leaving only the small range $d/n \leq \er_{\cD}(h^\star) \leq d \ln^9(n/d)/n$ unresolved.

\paragraph{Support vector machines and margin-based generalization.}
Support Vector Machines (SVMs) implement the maximum-margin principle for linear separation,
originating with the max-margin classifier
\citet{boser1992training}
and the soft-margin formulation of \citet{cortes1995svm}
(see also the monographs \citet{vapnik1998statistical,scholkopf2002learning}).
A large literature studies generalization in terms of margins, typically via scale-sensitive
complexity measures (fat-shattering, covering numbers, Rademacher complexity) and resulting
dimension-free or dimension-light bounds; early influential treatments include
\citep{bartlett1999margin}.
Recent work has substantially tightened our understanding of \emph{optimal} margin-based guarantees
for SVMs, including analyses via geometric Helly-type arguments and stable sample compression,
leading to essentially optimal sample complexity bounds for the SVM~\citep{bousquet2020helly,hanneke2021stable,hanneke2019svmoptimal}.
In parallel, 
\citet{gronlund2020svm}
revisited classic margin bounds for SVMs, proving
improved (near-tight) upper bounds together with nearly matching lower bounds
that almost settle SVM generalization in terms of margins~.
Very recently, 
\citet{larsen2025tightlargemargin}
obtained asymptotically tight generalization bounds for
large-margin halfspaces,
sharpening the classical
$\gamma^{-2}$-type dependence in the large-margin regime.
These margin-based results are complementary to ours: they provide strong guarantees for
\emph{specific} large-margin predictors (often the maximum-margin solution), whereas our focus is
on \emph{uniform} convergence over the full halfspace class without margin restrictions, including
consistent hypotheses that may have arbitrarily small margin.


\section{Homogeneous Halfspaces in the Plane}


In this section we prove the upper bounds for homogeneous halfspaces in $\mathbb{R}^2$:
Theorem~\ref{thm:realizableUB} (realizable case), Theorem~\ref{thm:agnosticUB} (bandwise deviation),
and Corollary~\ref{cor:agnosticUBDyadic} (dyadic union bound).
Consider the hypothesis set 
\[
\cH = \{ x \to \sign(w^\intercal x) : w \in \R^2\}
\]
consisting of homogeneous halfspaces in $\R^2$.

\subsection{Realizable Case (proof of Theorem~\ref{thm:realizableUB})}
A homogeneous halfspace is $h_u(x)=\sign(u^\top x)$ for some $u\in\R^2$.
Since $\sign(u^\top x)=\sign(u^\top (x/\|x\|))$ for $x\neq 0$, the label depends only on the
\emph{direction} of $x$. Thus we may push forward $\cD$ by $x\mapsto x/\|x\|$ and assume
the data lie on the unit circle. Parameterize a point by its angle $\theta\in[0,2\pi)$. We now think of the data distribution $\cD$ as sampling an angle $\theta \in [0,2\pi)$.

Under this identification, every homogeneous halfspace corresponds to a semicircle.
To make boundary effects explicit (and to support distributions with atoms), we fix the convention
\[
I_\alpha := (\alpha,\alpha+\pi] \quad (\text{mod }2\pi),
\]
and the classifier outputs $+1$ on $I_\alpha$ and $-1$ on its complement.
Assume (w.l.o.g.\ by rotation) that the target is $I_0=(0,\pi]$.

For $t\in(0,\pi]$, define the two disagreement sets corresponding to shifts by $+t$ and $-t$:
\[
G_t := (0,t]\ \cup\ (\pi,\pi+t],
\qquad
H_t := (\pi-t,\pi]\ \cup\ (2\pi-t,2\pi].
\]

Note that $I_t$ disagrees with $I_0$ exactly on $G_t$, and $I_{2\pi-t}$ disagrees with $I_0$
exactly on $H_t$. Also, the families $\{G_t\}_{t\in(0,\pi]}$ and $\{H_t\}_{t\in (0,\pi]}$
are nested: if $0 < s\le t$ then $G_s\subseteq G_t$ and $H_s\subseteq H_t$.

\begin{proof}[Proof of Theorem~\ref{thm:realizableUB}]
After pushing forward $\cD$ to angles, let $\mu$ denote the induced distribution on $[0,2\pi)$.
Assume the target is $I_0=(0,\pi]$ as above.

Fix any $\varepsilon\in(0,1)$. We upper bound the probability that there exists a consistent
hypothesis with true error $>\varepsilon$.

\emph{Case 1: shifts by $+t$ (i.e., $\alpha\in[0,\pi]$).}
Let
\[
t_\varepsilon := \inf\{t\in(0,\pi] : \mu(G_t)\ge \varepsilon\}.
\]
By definition and monotonicity, $\mu(G_{t_\varepsilon})\ge \varepsilon$, and for any
$t$ with $\mu(G_t)>\varepsilon$ we have $G_{t_\varepsilon}\subseteq G_t$.
Therefore, if there exists $t$ with $\mu(G_t)>\varepsilon$ and $S\cap G_t=\emptyset$,
then necessarily $S\cap G_{t_\varepsilon}=\emptyset$. Thus
\begin{align*}
\Pr\big(\exists\, t:\ \mu(G_t)>\varepsilon\ \wedge\ S\cap G_t=\emptyset\big)
&\le\\
\Pr(S\cap G_{t_\varepsilon}=\emptyset)
&=\\
(1-\mu(G_{t_\varepsilon}))^n
&\le\\ (1-\varepsilon)^n.
\end{align*}

\emph{Case 2: shifts by $-t$ (i.e., $\alpha\in[\pi,2\pi)$).}
The same argument with the nested family $\{H_t\}$ yields
\[
\Pr\big(\exists\, t:\ \mu(H_t)>\varepsilon\ \wedge\ S\cap H_t=\emptyset\big)
\le (1-\varepsilon)^n.
\]

By a union bound over the two directions,
\[
\Pr\big(\exists \text{ consistent } h:\ \Pr(h\neq h^\star)>\varepsilon\big)
\le 2(1-\varepsilon)^n \le 2e^{-n\varepsilon}.
\]
Setting $\varepsilon=\frac{1}{n}\ln\frac{2}{\delta}$ gives
\[
\sup_{h\in V(S)} \Pr_{\rx\sim \cD}\big(h(\rx)\neq h^\star(\rx)\big)
\le \min\!\left\{1,\ \frac{1}{n}\ln\frac{2}{\delta}\right\},
\]
where $V(S)$ is the {\em version space}, i.e., the
set of those $h\in\cH$ that are consistent with the labeled sample.
This implies the stated bound $\ln(2/\delta)/n$.
\end{proof}

\begin{remark}
The same tail bound implies the expected worst-case version-space error is $O(1/n)$:
if $X=\sup_{h\in V(S)} \Pr(h\neq h^\star)\in[0,1]$, then
\[
\mathbb{E}[X] \le \int_0^1 2(1-\varepsilon)^n\,d\varepsilon = \frac{2}{n+1}.
\]
\end{remark}

\subsection{Agnostic Case}
We now generalize the arguments above to the agnostic case. Here $\cD$ is a distribution over an angle $\theta \in [0, 2\pi)$ and a label $y \in \{-1,1\}$ and $\er_{\cD}(h):=\Pr_{(x,y) \sim \cD}(h(x) \neq y)$.


For an integer $i\ge 1$, define the dyadic risk band
\[
\cH_i \;:=\;\Bigl\{h\in\cH:\ \er_\cD(h)\in(2^{-i},\,2^{-i+1}] \Bigr\}.
\]
Assume $\cH_i\neq\emptyset$ and fix an arbitrary reference $h'\in\cH_i$.
By rotation, assume $h'=h_0$ with $I_0=(0,\pi]$.

For a measurable $A\subset[0,2\pi)$ and training set $S=(\theta_1,y_1),\dots,(\theta_n,y_n)$ write
\[
\mu(A):=\Pr_{(x,y) \sim \cD}(\theta\in A),
\ \ \
\hat\mu(A):=\frac1n\sum_{j=1}^n 1\{\theta_j\in A\}.
\]

\begin{lemma}[Critical-wedge localization on a band]\label{lem:critical-wedge-band}
Let $h'=h_0$ and let $\varepsilon\in(0,1)$.
Define the critical radii
\beq
t_+(\varepsilon):=\inf\{t\in(0,\pi]:\mu(G_t)\ge \varepsilon\},
\\
t_-(\varepsilon):=\inf\{t\in(0,\pi]:\mu(H_t)\ge \varepsilon\},
\eeq
and the corresponding open wedges
\beq
G^\circ_\varepsilon := (0,t_+(\varepsilon))\cup(\pi,\pi+t_+(\varepsilon)),
\\
H^\circ_\varepsilon := (\pi-t_-(\varepsilon),\pi)\cup(2\pi-t_-(\varepsilon),2\pi).
\eeq
Then $\mu(G^\circ_\varepsilon)\le\varepsilon$ and $\mu(H^\circ_\varepsilon)\le\varepsilon$.
Moreover, if $h_\alpha$ is a shift with $\alpha\in[0,\pi]$ and $\alpha\ge t_+(\varepsilon)$, then
\[
\er_\cD(h_\alpha)\ \ge\ \varepsilon - \er_\cD(h_0).
\]
The analogous statement holds for shifts in $[\pi,2\pi)$ using $H^\circ_\varepsilon$.
\end{lemma}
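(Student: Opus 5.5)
The plan is to prove both assertions using only monotonicity and continuity of the measure $\mu$ along the nested families $\{G_t\}$ and $\{H_t\}$, just as in the proof of Theorem~\ref{thm:realizableUB}. The agnostic setting enters in one place only, through a triangle inequality for disagreement probabilities.

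\emph{First claim.} I will show $\mu(G^\circ_\varepsilon)\le\varepsilon$ by continuity from below. For every $t<t_+(\varepsilon)$, the definition of the infimum gives $\mu(G_t)<\varepsilon$. Take any sequence $t_k\uparrow t_+(\varepsilon)$ with $t_k<t_+(\varepsilon)$. Then the $G_{t_k}$ are nested, and
\[
\bigcup_k G_{t_k}=(0,t_+(\varepsilon))\cup(\pi,\pi+t_+(\varepsilon))=G^\circ_\varepsilon .
\]
Hence $\mu(G^\circ_\varepsilon)=\lim_k\mu(G_{t_k})\le\varepsilon$. There is one degenerate case: the defining set is empty, i.e.\ $\mu(G_\pi)<\varepsilon$. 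I adopt the convention $t_+(\varepsilon)=\pi$ there. Then $G^\circ_\varepsilon\subseteq G_\pi$, so the bound holds trivially. If $t_+(\varepsilon)=0$ the wedge is empty. The argument for $H^\circ_\varepsilon$ is identical.

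\emph{Second claim.} First I will check that $\mu(G_{t_+(\varepsilon)})\ge\varepsilon$ in the nondegenerate case. This uses continuity from above. For $t_k\downarrow t_+(\varepsilon)$ with each $t_k$ in the defining set, we have $\mu(G_{t_k})\ge\varepsilon$ by monotonicity. The intersection $\bigcap_k G_{t_k}$ equals $G_{t_+(\varepsilon)}$, because the intervals are right-closed: $(0,t]$ and $(\pi,\pi+t]$. So $\mu(G_{t_+(\varepsilon)})\ge\varepsilon$. If the infimum is attained this is immediate anyway.

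Now let $\alpha\in[0,\pi]$ with $\alpha\ge t_+(\varepsilon)$. From the earlier observation, $h_\alpha$ and $h_0$ disagree exactly on $G_\alpha$. By nestedness,
\[
\Pr(h_\alpha(\theta)\neq h_0(\theta))=\mu(G_\alpha)\ge\mu(G_{t_+(\varepsilon)})\ge\varepsilon .
\]
Finally, pointwise $\{h_\alpha\neq h_0\}\subseteq\{h_\alpha\neq y\}\cup\{h_0\neq y\}$. This gives $\varepsilon\le \er_\cD(h_\alpha)+\er_\cD(h_0)$, which is the claim. The case of shifts in $[\pi,2\pi)$ is the mirror image: the shift by $-t$ disagrees with $h_0$ on $H_t$, and we use $t_-(\varepsilon)$.

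The only delicate step is the boundary bookkeeping. One must check that the right-closed convention makes $G_{t_+}$ the limit of $G_t$ from above, while the open wedge is the limit from below; together these are what allow atoms at the critical angle. One must also handle the empty-infimum case, where the hypothesis "$\alpha\ge t_+$" should be read with $t_+=\pi$, and there the lower bound must be checked or excluded separately. Everything else is routine.
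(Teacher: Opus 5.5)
Your proof is correct and follows the paper's argument for the main inequality: nesting gives $G_{t_+(\varepsilon)}\subseteq G_\alpha$, and $h_\alpha$, $h_0$ disagree there, which is exactly your triangle inequality. You also supply two steps the paper omits: the continuity-from-below proof of $\mu(G^\circ_\varepsilon)\le\varepsilon$, and the continuity-from-above justification of $\mu(G_{t_+(\varepsilon)})\ge\varepsilon$, which the paper simply calls ``by definition''.

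The loose end you flag is vacuous. $G_\pi$ and $H_\pi$ are the whole circle (mod $2\pi$, since $h_\pi=-h_0$), so $\mu(G_\pi)=1\ge\varepsilon$ and the defining set is never empty. Your continuity-from-above argument also forces $t_+(\varepsilon)>0$, because $\bigcap_{t>0}G_t=\emptyset$.
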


\begin{proof}
If $\alpha\ge t_+(\varepsilon)$ then by nesting $G_{t_+(\varepsilon)}\subseteq G_\alpha$.
On $G_{t_+(\varepsilon)}$ the classifiers $h_\alpha$ and $h_0$ always output opposite labels, hence
$1\{h_\alpha(\theta)\neq y\}=1-1\{h_0(\theta)\neq y\}$ on that set. Therefore
\[
\er_\cD(h_\alpha)\ \ge\
\Pr(\theta\in G_{t_+(\varepsilon)})-\Pr(h_0(\theta)\neq y)
\ \ge\ \varepsilon-\er_\cD(h_0),
\]
using $\mu(G_{t_+(\varepsilon)})\ge\varepsilon$ by definition.
\end{proof}

\begin{lemma}[Bandwise log-free uniform deviation]\label{lem:band-logfree}
Fix $i\ge 1$, assume $\cH_i\neq\emptyset$, and fix $h'\in\cH_i$.
Then for any $\delta\in(0,1)$, with probability at least $1-\delta$ over $S\sim\cD^n$,
simultaneously for all $h\in\cH_i$,
\[
\bigl|\er_S(h)-\er_\cD(h)\bigr|
\ \le\
C\Bigl(
\sqrt{\frac{2^{-i}\ln(1/\delta)}{n}}+\frac{\ln(1/\delta)}{n}
\Bigr),
\]
for a universal constant $C>0$ (independent of $i,n,\delta$).
\end{lemma}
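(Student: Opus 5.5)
The plan is to localize every hypothesis in the band to a small wedge around the reference $h'=h_0$, using Lemma~\ref{lem:critical-wedge-band}. Then $\er_S(h)-\er_\cD(h)$ splits into the deviation of the single hypothesis $h_0$ plus the deviation of an empirical measure over a \emph{chain} of nested sets of mass $O(2^{-i})$. A chain has essentially one-dimensional complexity, so its supremum obeys a Bernstein-type maximal inequality with no logarithmic factor. This is where the log-free bound should come from.

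\textbf{Step 1 (localization).} Set $\varepsilon := 5\cdot 2^{-i}$. Take any $h\in\cH_i$ and suppose it is a shift $h_\alpha$ with $\alpha\in[0,\pi]$ and $\alpha\ge t_+(\varepsilon)$. Lemma~\ref{lem:critical-wedge-band} would give $\er_\cD(h_\alpha)\ge \varepsilon-\er_\cD(h_0)\ge 5\cdot 2^{-i}-2\cdot 2^{-i}>2^{-i+1}$, contradicting $h\in\cH_i$. Hence every $h\in\cH_i$ with $\alpha\in[0,\pi]$ has $\alpha<t_+(\varepsilon)$, and so its disagreement region satisfies $G_\alpha\subseteq G^\circ_\varepsilon$, a set of $\mu$-mass at most $\varepsilon$. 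Symmetrically, shifts in $[\pi,2\pi)$ have disagreement region $H_t\subseteq H^\circ_\varepsilon$.

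\textbf{Step 2 (decomposition).} Consider a hypothesis with $\alpha<t_+(\varepsilon)$. On $G_\alpha$ its loss is the complement of the loss of $h_0$, and off $G_\alpha$ the two losses agree. Writing $Z:=\{(\theta,y): h_0(\theta)= y\}$, this gives
\[
1\{h_\alpha\neq y\}=1\{h_0\neq y\}+1\{\theta\in G_\alpha\}\,\bigl(1_Z-1_{Z^c}\bigr).
\]
Therefore
\[
\er_S(h_\alpha)-\er_\cD(h_\alpha)=\bigl(\er_S(h_0)-\er_\cD(h_0)\bigr)+\bigl(\hat P-P\bigr)(A_\alpha)-\bigl(\hat P-P\bigr)(B_\alpha),
\]
where $A_\alpha:=G_\alpha\times\{\cdot\}\cap Z$ and $B_\alpha:=G_\alpha\cap Z^c$, and $P,\hat P$ denote the true and empirical measures on $(\theta,y)$. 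Both families $\{A_\alpha\}$ and $\{B_\alpha\}$ are nested in $\alpha$ and have mass at most $\varepsilon$. The first term is handled by Bernstein's inequality for one hypothesis, using $\er_\cD(h_0)\le 2^{-i+1}$. The $H$-side gives two more chains in the same way, so a union bound over five events costs only constants.

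\textbf{Step 3 (main obstacle: maximal inequality for a chain).} I need: for a nested family $\{A_t\}$ with $P(A_t)\le\varepsilon$, with probability $1-\delta$,
\[
\sup_t|\hat P(A_t)-P(A_t)|\le C\Bigl(\sqrt{\varepsilon\ln(1/\delta)/n}+\ln(1/\delta)/n\Bigr).
\]

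The first approach I would try is the following.
\begin{itemize}
\item Reparametrize by $u=P(A_t)\in[0,\varepsilon]$. Atoms can be handled by taking right and left limits of the chain, which only adds limit sets of the same kind, or by randomized tie-breaking.
\item This reduces the claim to the uniform empirical process $\hat F(u)-u$ on $[0,\varepsilon]$.
\item The process $(\hat F(u)-u)/(1-u)$ is a martingale in $u$. With $\varepsilon\le 1/2$, apply Doob's maximal inequality to the exponential martingale, or Freedman's inequality, whose predictable variance is at most $\varepsilon/n$ and increments at most $1/n$. This gives the Bernstein tail for the supremum directly, with no union over a grid.
\item If $\varepsilon>1/2$, the statement follows from the classical DKW inequality.
\end{itemize}

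A fallback is a peeling argument over the dyadic grid $u_k=\varepsilon 2^{-k}$ with confidence levels $\delta_k=\delta 2^{-k}$. The deviations at level $k$ scale like $\sqrt{\varepsilon 2^{-k}(\ln(1/\delta)+k)/n}$, which sum geometrically to the claimed bound. Monotonicity of the chain controls points between grid values.

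Combining Steps 2 and 3 with $\varepsilon=5\cdot2^{-i}$ yields the lemma.
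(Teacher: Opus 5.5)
Your argument is correct, but after the shared localization step it takes a different route from the paper. The paper also confines every $h\in\cH_i^+$ to a single critical wedge $E=G^\circ_\varepsilon$. It then splits the error into conditional errors inside and outside $E$:
\begin{itemize}
\item the outside part and $|p-N_E/n|$ are each a single Bernstein bound;
\item the inside part is handled by conditioning on $N_E$, invoking the standard additive VC bound $\sqrt{\ln(1/\delta)/N_E}$ for semicircles restricted to $E$ (VC dimension $\le 2$), and rescaling by $p\approx N_E/n$.
\end{itemize}

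You instead write the loss of $h_\alpha$ exactly as the loss of $h_0$ plus signed indicators of two nested families. You then prove the chain maximal inequality from scratch, in two moves. The quantile coupling maps the chain exactly onto $\{U\le u\}$ with $u\le\varepsilon$, which disposes of atoms automatically. Doob's inequality for $e^{\pm\lambda M_u}$, with $M_u=(\hat F(u)-u)/(1-u)$, then transfers the Bernstein tail of $M_\varepsilon$ to the whole path.

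What your route buys:
\begin{itemize}
\item It is self-contained: the paper's log-freeness is hidden inside the chaining-based VC bound it cites.
\item It avoids conditioning on $N_E$ and the small-$N_E$ case.
\item It avoids the ratio estimate for $\er_\cD(h_0\,|\,E^c)$, which degrades when $1-p$ is small.
\end{itemize}
What the paper's route buys: it uses the nested structure only once, for localization. After that it needs only bounded VC dimension on $E$, not a total order.

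Drop the peeling fallback as written. With the grid $u_k=\varepsilon 2^{-k}$, monotonicity only controls chain members with $P(A_t)\in[u_{k+1},u_k]$ up to an additive $u_k-u_{k+1}=\varepsilon 2^{-k-1}$. For $k=0$ this is $\varepsilon/2\gg\sqrt{\varepsilon\ln(1/\delta)/n}$. Peeling over mass levels still needs a maximal inequality inside each shell. A working fallback would be genuine multi-scale chaining, but the martingale argument already suffices.
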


\begin{proof}
Rotate so $h'=h_0$. We treat the subfamily
\[
\cH_i^+ := \{h_\alpha\in\cH_i:\ \alpha\in[0,\pi]\},
\]
and note the $[\pi,2\pi)$ case is identical (we union bound over the two cases at the end).

\medskip\noindent
\textbf{Step 1: localize all $h\in\cH_i^+$ to one wedge.}
Set $\varepsilon:=2^{-i+3}$ and let $E:=G^\circ_\varepsilon$.
Since $\er_\cD(h_0)=\er_\cD(h')\le 2^{-i+1}$, Lemma~\ref{lem:critical-wedge-band} implies that any
$h_\alpha\in\cH_i^+$ must satisfy $\alpha<t_+(\varepsilon)$, hence
\[
h_\alpha(\theta)=h_0(\theta)\quad\text{for all }\theta\notin E.
\]
In particular, for every $h\in\cH_i^+$ we have $h=h_0$ on $E^c$.

Let $p:=\Pr(\theta\in E)=\mu(E)\le\varepsilon$ and let
\[
N_E:=|S\cap E|=\sum_{j=1}^n 1\{\theta_j\in E\}.
\]

\medskip\noindent
\textbf{Step 2: decompose the error inside/outside $E$.}
Write conditional (true) errors as
\beq
\er_\cD(h\,|\,E):=\Pr(h(\theta)\neq y\mid \theta\in E),
\\
\er_\cD(h_0\,|\,E^c):=\Pr(h_0(\theta)\neq y\mid \theta\notin E),
\eeq
and empirical conditional errors analogously on $S\cap E$ and $S\setminus E$.
Since $h=h_0$ on $E^c$, we have
\beq
\er_\cD(h)=p\,\er_\cD(h\,|\,E) + (1-p)\,\er_\cD(h_0\,|\,E^c),
\\
\er_S(h)=\frac{N_E}{n}\,\er_{S\cap E}(h) + \Bigl(1-\frac{N_E}{n}\Bigr)\er_{S\setminus E}(h_0).
\eeq
A short add-and-subtract gives the deterministic bound
\beqn
\label{eq:band-decomp}
\bigl|\er_\cD(h)-\er_S(h)\bigr|
\ \le\
p\bigl|\er_\cD(h\,|\,E)-\er_{S\cap E}(h)\bigr|
+\\\bigl|\er_\cD(h_0\,|\,E^c)-\er_{S\setminus E}(h_0)\bigr|
+2\Bigl|p-\frac{N_E}{n}\Bigr|.
\nonumber
\eeqn

\medskip\noindent
\textbf{Step 3: control each term with probability $1-\delta$.}

\emph{(a) Control $|p-N_E/n|$.} By Bernstein for Bernoulli indicators, with probability $\ge 1-\delta/4$,
\[
\Bigl|p-\frac{N_E}{n}\Bigr|
\ \le\
\sqrt{\frac{2p\ln(4/\delta)}{n}}+\frac{2\ln(4/\delta)}{3n}.
\]

\emph{(b) Control the outside-$E$ term for the single hypothesis $h_0$.}
Apply Bernstein to the bounded variables
$1\{\theta\notin E,\ h_0(\theta)\neq y\}$ to get, with probability $\ge 1-\delta/4$,
\beq
\bigl|\Pr(\theta\notin E,\ h_0(\theta)\neq y)-\widehat{\Pr}(\theta\notin E,\ h_0(\theta)\neq y)\bigr|
\ \le\\
\sqrt{\frac{2\,\er_\cD(h_0)\ln(4/\delta)}{n}}+\frac{2\ln(4/\delta)}{3n},
\eeq
where $\widehat{\Pr}$ denotes the empirical probability measure induced by the sample,
and similarly for $|\Pr(\theta\notin E)-\widehat{\Pr}(\theta\notin E)|$.
Combining these (and using $\er_\cD(h_0)\le 2^{-i+1}$) yields
\begin{align*}
\bigl|\er_\cD(h_0\,|\,E^c)-\er_{S\setminus E}(h_0)\bigr|
&\le\\
c_1\Bigl(\sqrt{\frac{2^{-i}\ln(1/\delta)}{n}}+\frac{\ln(1/\delta)}{n}\Bigr)
\end{align*}
for a universal constant $c_1$.

\emph{(c) Control the inside-$E$ uniform term.}
Condition on $N_E$ and note that, given $N_E$, the sample in $S\cap E$ is i.i.d.\ from $\cD(\cdot\mid \theta\in E)$.
The restrictions of semicircles $h\in\cH_i^+$ to $E$ form a VC class of constant dimension (in fact $\le 2$),
so a standard VC/Rademacher bound gives, with conditional probability $\ge 1-\delta/4$,
\[
\sup_{h\in\cH_i^+}\bigl|\er_\cD(h\,|\,E)-\er_{S\cap E}(h)\bigr|
\ \le\
c_2\Bigl(\sqrt{\frac{\ln(4/\delta)}{N_E}}+\frac{\ln(4/\delta)}{N_E}\Bigr),
\]
for a universal constant $c_2$ (when $N_E=0$ the left side is $0$).
Writing $\asymp $ to denote equivalence up to absolute multiplicative constants,
we put $p\le\varepsilon\asymp 2^{-i}$. From the concentration of $N_E$ around $pn$ from (a),
we get
\beq
p \sup_{h\in\cH_i^+}\bigl|\er_\cD(h|E)-\er_{S\cap E}(h)\bigr|
 \le
c_3\Bigl(\sqrt{\frac{\ln(1/\delta)}{2^{i}n}}+\frac{\ln(1/\delta)}{n}\Bigr)
\eeq
with probability at least $1-\delta/2$, for a universal constant $c_3$.

\medskip\noindent
\textbf{Step 4: conclude for $\cH_i^+$ and then union bound over the two directions.}
Plug the bounds from (a)--(c) into~\eqref{eq:band-decomp} and take a union bound.
This yields the stated bound for all $h\in\cH_i^+$ with probability at least $1-\delta/2$.
Repeat for the $[\pi,2\pi)$ case and union bound the two events.
\end{proof}

Note that Lemma~\ref{lem:band-logfree} essentially also gives Theorem~\ref{thm:dyadic}, except we need to argue that we can replace $2^{-i}$ by $\er_S(h)$. Let $c>0$ be a sufficiently large constant. If $2^{-i} < c\ln(1/\delta)/n$ then we conclude
\begin{align*}
|\er_S(h) - \er_\cD(h)| &\leq \\
C \left(\sqrt{\frac{2^{-i} \ln(1/\delta)}{n}} + \frac{\ln(1/\delta)}{n}\right) &\leq \\
C(\sqrt{c}+1) \cdot \frac{\ln(1/\delta)}{n}& \leq \\
C(\sqrt{c}+1) \cdot \left(\sqrt{\frac{\er_S(h) \ln(1/\delta)}{n}}+\frac{\ln(1/\delta)}{n} \right).
\end{align*}
If on the other hand $2^{-i} \geq c\ln(1/\delta)/n$, then we have
\begin{align*}
C \left(\sqrt{\frac{2^{-i} \ln(1/\delta)}{n}} + \frac{\ln(1/\delta)}{n}\right) &\leq C\left(\frac{2^{-i}}{\sqrt{c}} + \frac{2^{-i}}{c}\right).
\end{align*}
For large enough constant $c$ compared to $C$, we thus have $|\er_S(h)-\er_\cD(h)| < 2^{-i-1}$. Since $\er_\cD(h) \in (2^{-i},2^{-i+1}]$, this implies $\er_S(h) \geq \er_\cD(h) - 2^{-i-1} \geq 2^{-i}-2^{-i-1} \geq 2^{-i-1}$. Thus we conclude
\begin{align*}
|\er_S(h) - \er_\cD(h)| &\leq \\
C \left(\sqrt{\frac{2^{-i} \ln(1/\delta)}{n}} + \frac{\ln(1/\delta)}{n}\right) &\leq \\
C \left(\sqrt{\frac{2 \er_S(h) \ln(1/\delta)}{n}} + \frac{\ln(1/\delta)}{n}\right) &\leq \\
\sqrt{2} \cdot C \left(\sqrt{\frac{\er_S(h) \ln(1/\delta)}{n}} + \frac{\ln(1/\delta)}{n}\right).
\end{align*}
This proves Theorem~\ref{thm:dyadic}.

\subsection{Uniform deviation via a dyadic union bound}\label{subsec:dyadic-union}

\begin{proof}[Proof of Corollary~\ref{cor:agnosticUBDyadic}]
Let $m:=\lceil \log_2 n\rceil$ and define the tail class
\[
\cH_{\le} \;:=\; \{h\in\cH:\ \er_\cD(h)\le 1/n\}.
\]
For $i=1,\dots,m$, set $\delta_i:=\delta/(i(i+1))$ and set $\delta_0:=\delta/(m+1)$.
Then $\sum_{i=1}^m \delta_i = \delta\left(1-\frac{1}{m+1}\right)$, hence
$
\sum_{i=1}^m \delta_i + \delta_0 \;=\; \delta.
$

For each $i\in[m]$, if $\cH_i\neq\emptyset$, pick any reference $h'_i\in\cH_i$ and apply
Theorem~\ref{thm:dyadic} with confidence $\delta_i$. This gives an event $E_i$ of probability
at least $1-\delta_i$ on which the bound of Theorem~\ref{thm:dyadic} holds for all $h\in\cH_i$.
(If $\cH_i=\emptyset$, set $E_i$ to be the whole space.)

For the tail class, if $\cH_{\le}\neq\emptyset$, pick any reference $h'_{\le}\in\cH_{\le}$ and apply
the same argument as in Lemma~\ref{lem:band-logfree} with the sole change that we use the uniform
upper bound $\er_\cD(h)\le 1/n$ for all $h\in\cH_{\le}$ (the lower endpoint of a dyadic band is not used
in the proof). This yields an event $E_{\le}$ of probability at least $1-\delta_0$ on which,
simultaneously for all $h\in\cH_{\le}$,
\[
|\er_S(h)-\er_\cD(h)|
\ \le\
c\Bigl(
\sqrt{\frac{(1/n)\ln(1/\delta_0)}{n}}+\frac{\ln(1/\delta_0)}{n}
\Bigr).
\]
(If $\cH_{\le}=\emptyset$, set $E_{\le}$ to be the whole space.)

A union bound yields
\[
\Pr\Big(\Big(\bigcap_{i=1}^m E_i\Big)\cap E_{\le}\Big)
\ \ge\ 1-\sum_{i=1}^m\delta_i-\delta_0
\ =\ 1-\delta.
\]
On this intersection, every $h\in\cH$ satisfies the desired bound: if $\er_\cD(h)>1/n$ then $h\in\cH_i$
for some $i\le m$, and we invoke $E_i$; otherwise $h\in\cH_{\le}$ and we invoke $E_{\le}$.
Finally, since $i\le m=\lceil\log_2 n\rceil$ we have
\begin{align*}
    \ln \frac{1}{\delta_i} &= \ln\left( \frac{i(i+1)}{\delta}\right) = \ln \frac{1}{\delta} + O(\ln \ln n)\\
    \ln \frac{1}{\delta_0} &=\ln\left( \frac{m+1}{\delta}\right) = \ln \frac{1}{\delta} + O(\ln \ln n).
\end{align*}
\end{proof}


\section{Lower Bounds}
Consider inhomogeneous halfspaces in $\R^d$ for even $d \geq 2$. For both our realizable and agnostic lower bound, we design data distributions over a carefully designed finite support $\cX_{d/2, k}$. The properties of $\cX_{d/2,k}$ are described in  the following
\begin{lemma}
\label{lem:support}
For any even $d \geq 2$ and integer $k \geq 1$, there exists $d/2$ sets of points $X_1,\dots,X_{d/2} \subset \R^d$ so that $|X_i|=k$ for each $i$ and where $\cX_{d/2,k} = \cup_{i=1}^{d/2} X_i$ satisfies that any labeling $y : \cX \to \{-1,1\}$ assigning $-1$ to at most one point in each $X_i$ may be realized by an inhomogeneous halfspace in $\R^d$.
\end{lemma}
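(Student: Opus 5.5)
We will build the support from $d/2$ mutually orthogonal planar copies of points on a circle, using the fact that points on a circle (or convex curve) in $\R^2$ can be individually cut off by a line. Write $\R^d=\bigoplus_{i=1}^{d/2} P_i$ with $P_i=\mathrm{span}(e_{2i-1},e_{2i})$. In each plane $P_i$, take $k$ distinct points on the unit circle, $X_i=\{(\cos\phi_j)e_{2i-1}+(\sin\phi_j)e_{2i}: j=1,\dots,k\}$ with distinct angles $\phi_1,\dots,\phi_k$. Every point of $\cX_{d/2,k}$ then has norm $1$ and lies in exactly one plane $P_i$.

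Given a labeling $y$ with at most one $-1$ in each $X_i$, we construct the halfspace as follows. For each $i$, if $X_i$ has a point $u_i$ labeled $-1$, set $w_i:=u_i\in P_i$; otherwise set $w_i:=0$. Let $w=-\sum_i w_i$ and $b=1-\eta$ for a small $\eta>0$, so that the classifier is $\sign(w^\top x+b)=\sign(1-\eta-\sum_i w_i^\top x)$.

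To check it, take $x\in X_i$. Since the planes are orthogonal, $w^\top x=-w_i^\top x$. If $x=u_i$, then $w_i^\top x=\|u_i\|^2=1$, and the value is $-\eta<0$, giving label $-1$. If $x\neq u_i$ but $w_i\neq 0$, then $w_i^\top x=\cos(\phi_j-\phi_{j'})\le 1-\gamma$, where $\gamma>0$ is the minimum of $1-\cos$ over distinct angle pairs. So the value is at least $\gamma-\eta>0$ once $\eta<\gamma$, giving $+1$. If $w_i=0$, the value is $1-\eta>0$, giving $+1$.

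There is no real obstacle here. The only point needing care is choosing $\eta$ uniformly: we take $\eta=\gamma/2$ with $\gamma$ depending only on the fixed angles, and we handle the case $k=1$ (where $\gamma$ is vacuous) by taking $\eta=1/2$. Strict signs avoid any $\sign(0)$ convention issue. The evenness of $d$ is used only to pair up coordinates into planes.
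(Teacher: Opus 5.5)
Your argument is essentially the paper's proof. The paper also places $k$ points on the unit circle in each coordinate plane $\mathrm{span}(e_{2i-1},e_{2i})$, uses the normal vector $-x_{i,j}$ on each block containing a $-1$, and takes a bias just below $1$; its bias is $\cos(\pi/(4k))$ with evenly spaced angles, in place of your $1-\eta$.

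One slip needs fixing: your choice $\eta=\gamma/2$. Since $\gamma\le 2$, this only gives $\eta\le 1$. For $k=2$ with antipodal points, which is exactly what evenly spaced angles give, you get $\eta=1$ and $b=0$. Then every block with no $-1$ label evaluates to $\sign(0)$, contradicting your claim that $1-\eta>0$. Taking $\eta=\min\{\gamma,1\}/2$, or any $\eta\in(0,\min\{\gamma,1\})$, repairs this.
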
 

\begin{proof}
Our construction allocates two coordinates to each $X_i$. For each $i=1,\dots,d/2$ let $X_i$ consist of the $k$ points $x_{i,1},\dots,x_{i,k}$ where $x_{i,j}$ has all coordinates $0$ except coordinate $2i-1$ that we set to $\cos(j2\pi /k)$ and coordinate $2i$ that we set to $\sin(j 2 \pi/k)$.
We can thus think of $X_i$ as consisting of $k$ points evenly spaced on the unit circle when projected onto coordinates $2i-1$ and $2i$.
Now consider any labeling $y$ of $\cX_{d/2,k} = \cup_{i=1}^{d/2} X_i$ assigning $-1$ to at most one point in each $X_i$. We show that $y$ is realized by an inhomogeneous halfspace. We let the \emph{bias} of the halfspace be $\cos(\pi/(4k))$. Note that $0<\cos(\pi/(4k))<1$ for $k \geq 1$. The unnormalized normal vector $w_y$ is chosen so that for every $X_i$ where all points are assigned $1$, the two coordinates $2i-1$ and $2i$ are set to $0$, and for every $X_i$ where on point $x_{i,j}$ is assigned $-1$, we set coordinate $2i-1$ of $w_y$ to $-\cos(j 2 \pi/k)$ and coordinate $2i$ to $-\sin(j 2 \pi/k)$.

Observe that for any $X_i$ where all points are assigned $1$ by $y$, we have $\sign(w_y^T x_{i,j} +\cos(\pi/(4k))) = \sign(\cos(\pi/(4k))) = 1$ for all $x_{i,j} \in X_i$. Now for an $X_i$ where one point $x_{i,j}$ is labeled $-1$ by $y$, we have $\sign(w_y^T x_{i,j} +\cos(\pi/(4k))) = \sign(-1 + \cos(\pi/(4k))) = -1$, and for $h \neq j$, we have
\begin{align*}
  w_y^T x_{i,h} + \cos(\pi/(4k)) &=\\
  -(\cos(j2 \pi/k) \cos(h 2 \pi /k) &+\\ \sin(j 2 \pi/k) \sin(h 2 \pi/k)) + \cos(\pi/(4k)) 
                        &=\\ -\cos((h-j) 2 \pi/k) + \cos(\pi/(4k)) 
  &\geq\\ -\cos(2\pi/k) +\cos(\pi/(4k)).
\end{align*}
For $k \geq 2$ we have $\cos(2\pi/k) < \cos(\pi/(4k))$ and we conclude $\sign(w_y^T x_{i,h}+ \cos(\pi/(4k))) = 1$.
\end{proof}

Lemma~\ref{lem:support} will be used to design the support of a data distribution $\cD$ in both our realizable and agnostic lower bound. Both lower bounds exploit large deviations in the number of occurrences of a given $x \in \cX$ in a sample $\rS \sim \cD^n$ from the expected number of occurrences under. We thus make use of the following two anti-concentration results

\begin{lemma}[\citet{reversechernoff}]
\label{lem:chern}
Let $\rY_1,\dots,\rY_n$ be independent indicator random variables with success probability $p \leq 1/2$. For every $\sqrt{3/(np)} < \delta < 1/2$, 
\[
\Pr\left(\sum_i \rY_i  \leq (1-\delta)np\right)\geq \exp(-9np \delta^2).
\]
\end{lemma}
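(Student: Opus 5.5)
This is a reverse Chernoff (anti-concentration) bound for $\rZ=\sum_i \rY_i\sim\mathrm{Bin}(n,p)$. I will lower bound the probability of a single point mass at the threshold and use a local limit style estimate. Set $\mu=np$ and $m=\lfloor(1-\delta)\mu\rfloor$. It suffices to show $\Pr(\rZ=m')\ge \exp(-9\mu\delta^2)$ summed over a window of values $m'\le (1-\delta)\mu$. The natural window is $m'\in[(1-2\delta)\mu,(1-\delta)\mu]$. The hypothesis $\delta>\sqrt{3/\mu}$ gives $\delta\mu>\sqrt{3\mu}$, so this window contains roughly $\delta\mu\ge\sqrt{3\mu}$ integers, comparable to or larger than the standard deviation $\sqrt{\mu(1-p)}$.

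\textbf{Step 1: point-probability estimate.} For each integer $k=(1-\eta)\mu$ with $\eta\in[\delta,2\delta]$, I will lower bound $\Pr(\rZ=k)=\binom{n}{k}p^k(1-p)^{n-k}$ via Stirling's formula with explicit error bounds, e.g.\ $\sqrt{2\pi m}(m/e)^m\le m!\le e\sqrt{m}(m/e)^m$. This gives
\[
\Pr(\rZ=k)\ \ge\ \frac{c}{\sqrt{k(1-k/n)}}\exp\bigl(-n\,\mathrm{KL}(k/n\,\|\,p)\bigr).
\]

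\textbf{Step 2: bound the divergence.} Since $p\le1/2$ and $k/n=(1-\eta)p$, I will use
\[
\mathrm{KL}\bigl((1-\eta)p\,\|\,p\bigr)\ \le\ \frac{(\eta p)^2}{(1-\eta)p(1-p)}\ \le\ c'\eta^2p,
\]
which uses $\eta\le 2\delta<1$. I need care to keep $2\delta$ bounded away from $1$; if needed I will shrink the window to $[(1-\delta-\delta/2)\mu,(1-\delta)\mu]$ so that $\eta\le 3/4$. This gives $n\,\mathrm{KL}\le c''\mu\delta^2$.

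\textbf{Step 3: sum over the window.} Summing over the roughly $\delta\mu/2$ integers in the window yields
\[
\Pr\bigl(\rZ\le(1-\delta)\mu\bigr)\ \ge\ \frac{c\,\delta\mu}{2\sqrt{\mu}}\,e^{-c''\mu\delta^2}\ =\ \frac{c}{2}\,\delta\sqrt{\mu}\,e^{-c''\mu\delta^2}.
\]
Here $\delta\sqrt{\mu}>\sqrt3$, so the prefactor is a constant. The polynomial prefactor is absorbed using the slack between $c''$ and $9$: since $\mu\delta^2>3$, a constant factor $C$ satisfies $C\ge e^{-(9-c'')\mu\delta^2}$ whenever $(9-c'')\cdot 3\ge\ln(1/C)$.

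\textbf{Main obstacle.} The delicate part is the constant bookkeeping: getting $c''$ well below $9$ and the Stirling prefactor large enough that $(9-c'')\cdot 3$ dominates its logarithm. Rather than doing this myself, I would take the lemma from the cited source~\citet{reversechernoff}, whose constants are exactly these. If I had to prove it, I would first try the cleaner route of computing the KL exactly, using $\ln(1-\eta)\ge-\eta-\eta^2$ for $\eta\le1/2$.
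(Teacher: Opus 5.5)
The paper does not prove this lemma; it imports it from \citet{reversechernoff}. So your fallback of citing it is exactly what the paper does. Your plan is also the standard way to prove it: sum Stirling lower bounds on $\Pr(Z=k)$ over $k\in[(1-2\delta)\mu,(1-\delta)\mu]$, then use the hypothesis $\delta\sqrt{\mu}>\sqrt3$ to absorb the prefactor.

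As a self-contained argument, though, your sketch does not close, and the problem is in Step~2 rather than in unfinished arithmetic. Your bound $\mathrm{KL}((1-\eta)p\,\|\,p)\le \eta^2p/((1-\eta)(1-p))$ has $1-\eta$ in the denominator, which forces you to shrink the window. For $\delta$ near $1/2$ the best you then get is $1-\eta\ge 1/4$, $1-p\ge 1/2$ and $\eta\le 3\delta/2$, so $n\,\mathrm{KL}\le 8\eta^2\mu\le 18\delta^2\mu$. Thus $c''$ can be $18>9$, and the slack you plan to spend in Step~3 is negative. Your proposed repair also points the wrong way: on the factor $(\mu/k)^k=(1-\eta)^{-k}$, lower-bounding the probability requires $\ln(1-\eta)\le-\eta$, not $\ln(1-\eta)\ge-\eta-\eta^2$.

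The fix is to use the $\chi^2$ bound $\mathrm{KL}(q\,\|\,p)\le (q-p)^2/(p(1-p))$, whose denominator is $p(1-p)$ rather than $(k/n)(1-p)$. It gives $n\,\mathrm{KL}((1-\eta)p\,\|\,p)\le \eta^2\mu/(1-p)\le 2\eta^2\mu\le 8\delta^2\mu$ on the \emph{full} window $\eta\in[\delta,2\delta]$, with no $1/(1-\eta)$ factor. Equivalently, apply $\ln(1-\eta)\le-\eta$ to $(\mu/k)^k$ and $\ln(1+x)\le x$ to $((n-k)/(n-\mu))^{-(n-k)}$.

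The window contains at least $\delta\mu-1$ integers. By your Step~1 each satisfies $\Pr(Z=k)\ge \frac{\sqrt{2\pi}}{e^2}\mu^{-1/2}e^{-8\delta^2\mu}$; the case $k=0$ is trivial. Hence
\[
\Pr\bigl(Z\le(1-\delta)\mu\bigr)\ \ge\ \frac{\sqrt{2\pi}}{e^{2}}\Bigl(\delta\sqrt{\mu}-\frac{1}{\sqrt{\mu}}\Bigr)e^{-8\delta^2\mu}\ \ge\ 0.48\,e^{-8\delta^2\mu}\ \ge\ e^{-9\delta^2\mu}.
\]
The middle step uses $\delta\sqrt{\mu}>\sqrt3$ and $\mu>3/\delta^2>12$. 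The last step uses $e^{-\delta^2\mu}<e^{-3}<0.48$.
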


\begin{lemma}
\label{lem:anti}
    Let $\rY_1,\dots,\rY_k$ be negatively correlated indicator random variables, i.e.~$\E[\rY_i \rY_j] \leq \E[\rY_i] \E[\rY_j]$ for $i \neq j$. Let $\rY = \sum_{i=1}^k \rY_i$ denote their sum and $\mu = \E[\rY]$ its expectation. Then
$
    \Pr(\rY \geq \mu/2) \geq \min\{1, \mu\}/8.
$
    
\end{lemma}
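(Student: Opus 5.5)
We prove Lemma~\ref{lem:anti} with the second-moment (Paley--Zygmund) method, splitting into the cases $\mu \ge 1$ and $\mu < 1$. Throughout, write $p_i := \E[\rY_i]$, so that $\mu = \sum_i p_i$.

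\emph{Second moment.} Since the $\rY_i$ are indicators, $\rY_i^2 = \rY_i$. Using negative correlation for $i \neq j$,
\[
\E[\rY^2] = \sum_i \E[\rY_i] + \sum_{i\neq j}\E[\rY_i\rY_j] \le \mu + \sum_{i \neq j} p_i p_j \le \mu + \mu^2 .
\]

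\emph{Paley--Zygmund.} For nonnegative $\rY$ and $\theta \in [0,1]$,
\[
\Pr(\rY \ge \theta \mu) \ge (1-\theta)^2 \frac{\mu^2}{\E[\rY^2]}.
\]
This follows from the Cauchy--Schwarz bound
\[
\E[\rY] \le \theta\mu + \E\bigl[\rY \one\{\rY \ge \theta\mu\}\bigr] \le \theta\mu + \sqrt{\E[\rY^2]\,\Pr(\rY\ge\theta\mu)} .
\]
Taking $\theta = 1/2$ and inserting the second-moment bound gives
\[
\Pr(\rY \ge \mu/2) \ge \frac14 \cdot \frac{\mu^2}{\mu+\mu^2} = \frac{\mu}{4(1+\mu)} .
\]

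\emph{Case $\mu \ge 1$.} Here $\mu/(1+\mu) \ge 1/2$, so the probability is at least $1/8 = \min\{1,\mu\}/8$.

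\emph{Case $\mu < 1$.} Here $1+\mu < 2$, so the probability is at least $\mu/8 = \min\{1,\mu\}/8$.

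The degenerate case $\mu = 0$ is trivial, since the claimed bound is then $0$. One subtle point is that the event $\rY \ge \mu/2$ with $\mu < 1$ amounts to $\rY \ge 1$, because $\rY$ is integer-valued. This does not affect the argument: Paley--Zygmund gives the bound for the event $\rY \ge \mu/2$ directly, so no separate treatment is needed.

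The main step is the second-moment bound, and it is straightforward once negative correlation is used to drop the cross terms down to $\sum_{i\neq j} p_i p_j \le \mu^2$. No step is a real obstacle; the only care needed is to track constants so that the final bound comes out as exactly $\min\{1,\mu\}/8$.
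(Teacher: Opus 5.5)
Your proposal is correct and follows essentially the same route as the paper: negative correlation gives $\E[\rY^2] \le \mu + \mu^2$, and Paley--Zygmund with $\theta = 1/2$ then gives $\mu/(4(1+\mu)) \ge \min\{1,\mu\}/8$. You also derive Paley--Zygmund from Cauchy--Schwarz and split into the cases $\mu \ge 1$ and $\mu < 1$; the paper simply cites the inequality and compresses the final step into one chain, so this is only a difference in presentation.
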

\begin{proof}
We see that 
\begin{align*}
    \E[\rY^2] = \sum_{i} \sum_j \E[\rY_i \rY_j] 
    \leq \sum_i \E[\rY_i^2] + \sum_i \sum_{j \neq i} \E[\rY_i]\E[\rY_j] 
    \leq \mu + \left(\sum_i \E[\rY_i]\right)^2 
    = \mu + \mu^2.
\end{align*}
By Paley-Zygmund, this implies
\begin{align*}
    \Pr(\rY \geq \mu/2) \geq \frac{1}{4} \cdot \frac{\E[\rY]^2}{\E[\rY^2]} 
    \geq \frac{1}{4} \cdot \frac{\mu^2}{\mu + \mu^2} 
    = \frac{1}{4} \cdot \frac{\mu}{1 + \mu} 
    \geq \min\{1,\mu\}/8.
\end{align*}
\end{proof}

\subsection{Realizable Case}
We prove our first main lower bound, started in Theorem~\ref{thm:realizableLB}.

\begin{proof}[Proof of Theorem~\ref{thm:realizableLB}]

Let
$
k \;:=\; \left\lceil \frac{8n}{d \ln(n/d)} \right\rceil.
$
Let $\cD$ be the uniform distribution over $\cX_{d/2,k} = \cup_{i=1}^{d/2} X_i$ and let the target halfspace $h^\star$ be an arbitrary halfspace assigning the label $1$ to all points in $\cX_{d/2,k}$. Such a halfspace is guaranteed to exist by Lemma~\ref{lem:support}.

We start by arguing that for each $X_i$, it holds with constant probability over a sample $\rS \sim \cD^n$ that there is at least one point $x_{i,j} \in X_i$ with $x_{i,j} \notin \rS$. For this, define an indicator random variable $\rY_{i,j}$ taking the value $1$ if $x_{i,j} \notin \rS$ and $0$ otherwise.  Then
$
  \E[\rY_{i,j}] = \left(1-\frac{2}{dk}\right)^n 
  \ge \exp\!\left(-\frac{4n}{dk}\right),
$
where the last step uses $\ln(1-p)\ge -2p$ for $p\in[0,1/2]$ and the fact that $2/(dk)\le 1/2$ for $k\ge 1$ and $d\ge 2$.
By the choice of $k$, we have $dk \ge 8n/\ln(n/d)$ and thus
\[
\exp\!\left(-\frac{4n}{dk}\right) \;\ge\; \exp\!\left(-\frac{\ln(n/d)}{2}\right) \;=\; \sqrt{\frac{d}{n}}.
\]
Hence $\E[\rY_{i,j}] \ge \sqrt{d/n}$ and letting $\rY_i = \sum_{j=1}^k \rY_{i,j}$ we get
$
\E[\rY_i] \;\ge\; k\sqrt{\frac{d}{n}} \;\ge\; \frac{8\sqrt{n/d}}{\ln(n/d)}.
$
In particular, $\E[\rY_i]\ge 1$ for all $n>d$.
Moreover, the variables $\rY_{i,j}$ and $\rY_{i,h}$ for $j \neq h$ are negatively correlated, i.e.\ $\E[\rY_{i,j} \rY_{i,h}] \leq \E[\rY_{i,j}]\E[\rY_{i,h}]$. Therefore, by Lemma~\ref{lem:anti},
\[
\Pr(\rY_i \geq \E[\rY_i]/2) \;\ge\; \min\{1,\E[\rY_i]\}/8 \;\ge\; 1/8,
\]
and since $\rY_i$ is integer-valued this implies $\Pr(\rY_i \geq 1)\geq 1/8$.
Let $\rZ_i$ be the indicator of the event $\{\rY_i\ge 1\}$; then $\Pr(\rZ_i=1)\ge 1/8$.
We now have that $\E[\sum_{i=1}^{d/2} \rZ_i] \geq d/16$. Considering the non-negative random variable $\rR = d/2 - \sum_{i=1}^{d/2} \rZ_i$ we have $\E[\rR] \leq 7d/16$ and thus by Markov's inequality,
\[
\Pr\!\left[\rR \le \frac{15d}{32}\right] \;\ge\; 1 - \frac{\E[\rR]}{15d/32} \;\ge\; 1 - \frac{7/16}{15/32} \;=\; \frac{1}{15}.
\]
On this event we have $\sum_{i=1}^{d/2} \rZ_i \geq d/32$.

Now define a labeling $y : \cX_{d/2,k} \to \{-1,1\}$ that assigns $-1$ to a point $x_{i,j} \notin \rS$ for every $i$ where $\rZ_{i} = 1$ (choosing one such missing point per such $i$), and assigns $+1$ to all remaining points. By Lemma~\ref{lem:support} there is an inhomogeneous halfspace $h_y$ realizing the labeling $y$.

The halfspace $h_y$ labels all points $x_{i,j} \in \rS$ with the label $1$ and is thus consistent with the target $h^\star$ on $\rS$. However its error under the distribution $\cD$ is at least
$
  \er_{\cD}(h_y)
  \;\ge\;
  \frac{d/32}{(d/2)k}
  \;=\;
  \frac{1}{16k}.
$
Using the definition of $k$ and the fact that $\left\lceil A\right\rceil \le A+1 \le \tfrac{9}{8}A$ for $A\ge 8$ (and here $A=\tfrac{8n}{d\ln(n/d)}\ge 8$ since $\ln(n/d)\le n/d$ for $n>d$), we have
$
1/k \ge d\ln(n/d)/(9n).
$
Therefore,
\[
\er_{\cD}(h_y) \;\ge\; \frac{1}{16}\cdot \frac{d\ln(n/d)}{9n} \;=\; \frac{d\ln(n/d)}{144n}.
\]
This completes the proof for even $d$ by taking $c \le 1/144$ (and noting the event holds with probability at least $1/15 \ge c$). For odd $d \geq 3$, the lower bound follows from the lower bound for $d' = d-1$ and a rescaling of $c$ by a factor at most $2$.
\end{proof}

\subsection{Agnostic Case}
We next turn to proving our second lower bound, stated in Theorem~\ref{thm:agnosticLB}. For this, we need to relate $\er_S(h)$ and $\er_\cD(h)$ to within a constant factor. We have done this separately in the following lemma
\begin{lemma}
\label{lem:withinconstants}
There is a universal constant $c>0$, such that for any input domain $\cX$, integer $d \geq 1$, hypothesis set $\cH$ of VC-dimension $d$, distribution $\cD$ over $\cX \times \{-1,1\}$, any $0 < \delta < 1/2$ and number of samples $n \geq c(d\ln(n/d) + \ln(1/\delta))$ it holds with probability at least $1-\delta$ over a sample $S \sim \cD^n$ that every hypothesis $h \in \cH$ with $\er_\cD(h) \geq c(\ln(1/\delta) + d \ln(n/d))/n$ has
$
\frac{1}{2} \er_S(h) \leq \er_\cD(h) \leq 2 \er_S(h).$
\end{lemma}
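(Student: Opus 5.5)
We derive Lemma~\ref{lem:withinconstants} directly from Theorem~\ref{thm:ERM}. On its event of probability at least $1-\delta$, every $h\in\cH$ satisfies
\[
|\er_\cD(h)-\er_S(h)| \le c_0\Bigl(\sqrt{\tfrac{\er_S(h)\,(d\ln(e/\er_S(h))+\ln(1/\delta))}{n}}+\tfrac{d\ln(n/d)+\ln(1/\delta)}{n}\Bigr),
\]
where $c_0$ is the constant of Theorem~\ref{thm:ERM}. Write $L:=d\ln(n/d)+\ln(1/\delta)$. The plan is to show that, once $\er_\cD(h)\ge cL/n$ with $c$ large compared to $c_0$, the right-hand side is at most $\tfrac12\er_\cD(h)$. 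This gives $\er_S(h)\ge \tfrac12\er_\cD(h)$ and $\er_S(h)\le \tfrac32\er_\cD(h)$, which is what we need. The sample-size condition $n\ge c(d\ln(n/d)+\ln(1/\delta))$ will be used to meet the hypothesis $n\ge c_0(d+\ln(1/\delta))$ of Theorem~\ref{thm:ERM}.

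\textbf{Obstacle.} The main issue is the factor $\ln(e/\er_S(h))$, which blows up when $\er_S(h)$ is small. I would handle it by splitting into two cases. First, if $\er_S(h)\ge 1/n$, then $\ln(e/\er_S(h))\le 1+\ln n$. Second, if $\er_S(h)<1/n$, then $\er_S(h)=0$, since empirical errors are multiples of $1/n$. In the second case the square-root term vanishes entirely. It is cleaner, though, to bound the product $x\ln(e/x)$ directly.

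\textbf{Key estimate.} Take $x=\er_S(h)$ and $a=\er_\cD(h)$. The map $x\mapsto x\ln(e/x)$ is increasing on $[0,1]$, so if $x\le 2a$ (assume $a\le 1/2$; larger $a$ is easy) then
\[
x\ln(e/x)\le 2a\ln(e/(2a))\le 2a\ln(en/(cL))\le 2a\ln(en/d),
\]
and $d\ln(en/d)\le 2L$ for $n\ge ed$. Hence the square-root term is at most $\sqrt{4c_0^2aL/n}\le 2c_0 a/\sqrt{c}$, and the additive term is at most $c_0 a/c$. For $c\ge 64c_0^2+4c_0$ both are at most $a/4$, so their sum is at most $a/2$.

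\textbf{Ruling out $x>2a$.} If $x>2a$, the deviation is $x-a>x/2$. The same monotonicity argument with $x$ in place of $a$ (using $x>2a\ge cL/n$) bounds the right-hand side by $x/2$. This is a contradiction, so $x\le 2a$ always holds and the key estimate applies.

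Finally, I would fix constants so that a single $c$ works in both the sample-size condition and the error threshold, noting that $c_0(d+\ln(1/\delta))\le c(d\ln(n/d)+\ln(1/\delta))$ once $n\ge ed$.
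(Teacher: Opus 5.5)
Your proof is correct and is essentially the paper's argument: invoke Theorem~\ref{thm:ERM}, then use monotonicity of $x\ln(e/x)$ together with the threshold on $\er_\cD(h)$ to absorb the right-hand side into a small multiple of $\er_\cD(h)$ or $\er_S(h)$, in two cases. The only difference is cosmetic: the paper splits at $\er_S(h)\le\er_\cD(h)$ and treats the other case directly rather than by your contradiction at $2\er_\cD(h)$. Your explicit use of $n\ge ed$ is not implied by the stated hypothesis, but the paper's proof tacitly needs $n$ to exceed $d$ by a constant factor as well, and the statement fails for $n=d+1$ with $d$ large. So this is a looseness in the statement, not a gap in your argument.
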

\begin{proof}
From Theorem~\ref{thm:ERM}, it holds with probability at least $1-\delta$ that every $h \in \cH$ satisfies
\begin{align*}
|\er_{\cD}(h)-\er_{\rS}(h)| \leq
c'  \left(\sqrt{\frac{\er_{\rS}(h)(d \ln(\tfrac{e}{\er_{\rS}(h)}) +
\ln(\tfrac{1}{\delta}))}{n}} + \frac{d \ln(\tfrac{n}{d})+\ln(\tfrac{1}{\delta})}{n}\right).
\end{align*}
for a constant $c'>0$. Now let $h \in \cH$ have $\er_\cD(h) \geq c(\ln(1/\delta) + d \ln(n/d))/n$ for a sufficiently large constant $c>0$. We split in two cases. First, if $\er_S(h) \leq \er_\cD(h)$ then using that $x \ln(e/x)$ is increasing in $x$ for $0 < x < 1$ we see that
\begin{align*}
    \sqrt{\frac{\er_{\rS}(h)(d \ln(\tfrac{e}{\er_{\rS}(h)}) +
\ln(\tfrac{1}{\delta}))}{n}} + \frac{d \ln(\tfrac{n}{d})+\ln(\tfrac{1}{\delta})}{n} &\leq \\
\sqrt{\frac{\er_{\cD}(h)(d \ln(\tfrac{e}{\er_{\cD}(h)}) +
\ln(\tfrac{1}{\delta}))}{n}} + \frac{\er_\cD(h)}{c} &\leq \\
\sqrt{\frac{\er_{\cD}(h)(d \ln(\tfrac{e n}{d \ln(n/d)}) +
\ln(\tfrac{1}{\delta}))}{n}} + \frac{\er_\cD(h)}{c} &\leq \\
\sqrt{\frac{2\er_{\cD}(h) \cdot \er_{\cD}(h)}{c}} + \frac{\er_\cD(h)}{c} &\leq \\
\left(\sqrt{\frac{2}{c}} + \frac{1}{c}\right) \er_\cD(h).
\end{align*}
Thus for $c$ large enough, we conclude $|\er_\cD(h)-\er_S(h)| \leq \er_\cD(h)/2$, implying $\er_S(h) \geq \er_\cD(h)/2$. Since we already assumed $\er_S(h) \leq \er_\cD(h)$ we therefore have $\er_S(h) \leq \er_\cD(h) \leq 2\er_S(h)$.

Next, if $\er_S(h) > \er_\cD(h)$, we see that
\begin{align*}
    \sqrt{\frac{\er_{\rS}(h)(d \ln(\tfrac{e}{\er_{\rS}(h)}) +
\ln(\tfrac{1}{\delta}))}{n}} + \frac{d \ln(\tfrac{n}{d})+\ln(\tfrac{1}{\delta})}{n} &\leq \\
\sqrt{\frac{\er_{S}(h)(d \ln(\tfrac{en}{d \ln(n/d)}) +
\ln(\tfrac{1}{\delta}))}{n}} + \frac{\er_S(h)}{c} &\leq \\
\sqrt{\frac{2\er_{S}(h) \cdot \er_{S}(h)}{c}} + \frac{\er_S(h)}{c} &\leq \\
\left(\sqrt{\frac{2}{c}} + \frac{1}{c}\right) \er_S(h).
\end{align*}
For $c>0$ large enough, we thus have $|\er_\cD(h)-\er_S(h)| \leq \er_S(h)/2$ hence $\er_S(h) \leq \er_\cD(h) + \er_S(h)/2 \Rightarrow \er_S(h)/2 \leq \er_\cD(h)$. We thus have $\er_S(h)/2 \leq \er_\cD(h) \leq \er_S(h)$.
\end{proof}

With this established, we are ready to prove Theorem~\ref{thm:agnosticLB}.

\begin{proof}[Proof of Theorem~\ref{thm:agnosticLB}]
Let $c_1 d \ln(n/d)/n \leq \tau \leq 1/c_1$ for $c_1$ large enough and define $k=2\lceil d/256\rceil /(\tau d)$. Assume for simplicity that $k$ is integer (which can be ensured by choosing $\tau$ properly and rescaling $c$ in the lower bound by a constant factor). Let $\cD$ be the uniform distribution over $\cX_{d/2,k}$ and let the target halfspace $h^\star$ assign the label $1$ to all points in $\cX_{d/2,k}$.

Proceeding in a similar fashion as the realizable case, we now argue that for a random sample $\rS \sim \cD^n$, a constant fraction of the $X_i$ contains a point $x_{i,j}$ for which $\rS$ has \emph{few} copies (instead of no copies as in the realizable case). Observe that for any $x_{i,j}$, the number of copies of $x_{i,j}$ in $\rS$ is binomial distributed with $n$ trials and success probability $2/(dk)$. Now let $\rZ$ be binomial with $n$ trials and success probability $2/(dk)$. Define $t$ as the largest integer such that $\Pr(\rZ \leq 2n/(dk) - t) \geq 1/(8k)$. Using Lemma~\ref{lem:chern} with $\delta = \sqrt{dk\ln(8k)/(18n)}$ shows that $t \geq 2\delta n/(dk)=\sqrt{2n \ln(8k)/(9kd)}$ provided that $\delta$ satisfies the conditions $\sqrt{3dk/(2n)} < \delta < 1/2$. Since $\delta = \sqrt{dk \ln(8k)/(18n)}$, the first is satisfied for $\ln(8k) > 27$, i.e. when $k$ is a sufficiently large constant. Since $1/(128 \tau) \leq k \leq 2/\tau$, this is ensured by the constraints $\tau \leq 1/c_1$ for large enough constant $c_1>0$. The second condition $\delta < 1/2$ is satisfied if $\sqrt{d \ln(2/\tau)/(9n \tau)} < 1/2$. This is satisfied by the constraint $c_1 d \ln(n/d)/n \leq \tau$ for large enough constant $c_1>0$.

Letting $\rY_{i,j}$ take the value $1$ if we see no more than $2n/(dk)-t$ copies of $x_{i,j}$ in $\rS$ and $\rY_i = \sum_{j=1}^k \rY_{i,j}$, we have $\E[\rY_i] \geq 1/8$. Moreover $\rY_{i,j}$ and $\rY_{i,h}$ are negatively correlated. Thus by the exact same calculations as in the proof of Theorem~\ref{thm:realizableLB}, it holds with constant probability over $\rS$ that there are at least $\lceil d/256\rceil$ of the sets $X_i$ (the $\lceil \cdot \rceil$ follows by integrality) that contain at least one point $x_{i,j}$ with no more than $2n/(dk)-t$ copies in $\rS$. 

Now define a labeling $y$ that takes the value $-1$ on an arbitrary set of $\lceil d/256 \rceil$ such points and $+1$ on all remaining points. This labeling is realizable by a halfspace $h_y$ by Lemma~\ref{lem:support}.
Examining $h_y$, we first see that
$
  \er_{\cD}(h) = \frac{2\lceil d/256\rceil}{dk} =\tau.
$
On the other hand, we have
\[
  \er_{\rS}(h) \leq \frac{\lceil d/256\rceil(2n/(dk)-t)}{n} = \er_{\cD}(h) - \frac{\lceil d/256\rceil t}{n}.
\]
It follows that
\[
  \er_{\cD}(h) - \er_{\rS}(h) \geq \frac{\lceil d/256 \rceil t}{n} \geq c \sqrt{\frac{d \ln k}{k n}}.
\]
Since $1/(128 \tau) \leq k \leq 2/\tau$, and $\tau=\er_\cD(h)$ we have 
\[
\er_\cD(h)-\er_S(h) \geq c' \sqrt{\er_\cD(h) \frac{d \ln(e/\er_{\cD}(h))}{n}}.
\]
Finally, Lemma~\ref{lem:withinconstants} and a union bound shows that with constant probability, we simultaneously have $(1/2)\er_S(h) \leq \er_\cD(h) \leq 2\er_S(h)$ and we may replace $\er_\cD(h)$ by $\er_S(h)$ in this lower bound.
\end{proof}

\subsection{Dyadic Lower Bound for Homogeneous Halfspaces 
}
Consider the point set $\cX = \{x_0,\dots,x_{k-1}\}$ with $x_j = (\cos(2 \pi j/k), \sin(2 \pi j/k))$ of $k$ points spaced uniformly on the unit circle. We think of $x_j$ as being associated with the angle $2\pi j/k$. The parameter $k$ will be fixed as a power of $2$.

Let the target homogeneous halfspace $h^\star$ assign $+1$ to points with an angle $\alpha$ in the semicircle $[0, \pi)$ and $-1$ to the remaining points. Let $\cD$ be the uniform distribution over $\cX$.

Let $B \leq k$ and assume $k/2$ is a power of $B$. For $i=1,\dots,\log_B(k/2)$, let $h_i$ be the halfspace corresponding to the semicircle $[2 \pi B^i/k, 2\pi B^i/k + \pi)$. The halfspace $h_i$ misclassifies precisely the $2B^{i}$ points $x_j$ with $j$ in the set $C_i = \{0,\dots, B^i-1\} \cup \{k/2, k/2+1,\dots,k/2+B^i-1\}$.

Define the sets $D_i = C_i \setminus \left(\cup_{j < i}\ C_j\right) = C_i \setminus C_{i-1}$. Then $|D_i| = 2(B^i - B^{i-1})$ (except $|D_1|=2B$). Since $\cD$ is uniform, the expected number of samples a training set $\rS \sim \cD^n$ contains from $D_i$ is $\mu_i = n|D_i|/k$. Now define an indicator random variable $\rY_i$ taking the value $1$ if $\rS$ contains fewer than $\mu_i - c \sqrt{\mu_i \ln(\log_B(k))}$ samples from $D_i$ for sufficiently small constant $c>0$. For $\mu  \geq c_1\ln(\log_B(k))$ for large enough constant $c_1>0$, we have $\Pr(\rY_i=1) \geq 1/\log_B(k/2)$. Furthermore, the $\rY_i$ are negatively correlated. Letting $\rY = \sum_i \rY_i$ it follows from Lemma~\ref{lem:anti} that $\Pr(\rY \geq 1/2) \geq 1/8$. Since $\rY$ is integer, this implies $\Pr(\rY \geq 1) \geq 1/8$.

Secondly, a union bound implies that with probability at least $15/16$, we have that $|\rS \cap D_i| \leq \mu_i + c_1 \sqrt{\mu_i \ln(\log_B(k))} \leq 2 \mu_i$ for every $i$, where $c_1 > 0$ is a constant.
Now assume $\rY \geq 1$ and $|\rS \cap D_i| \leq \mu_i + c_1 \sqrt{\mu_i \ln(\log_B(k))} \leq 2 \mu_i$ for every $i$. These both occur with probability at least $15/16-7/8 = 1/16$. Let $i$ be the smallest index such that $\rY_i=1$. Then $|\rS \cap D_i| \leq \mu_i - c \sqrt{\mu_i \ln(\log_B(k))}$. We then have
\begin{align*}
    |\rS \cap C_i| =
    \sum_{j=1}^i |\rS \cap D_j| 
    &\leq\\
    \mu_i - c \sqrt{\mu_i \ln(\log_B(k))} &+\\ \sum_{j=1}^{i-1} \left(\mu_j + c_1 \sqrt{\mu_j \ln(\log_B(k))} \right) 
    &\leq \\\left(\sum_{j=1}^i \mu_j\right) - c \sqrt{\ln(\log_B(k))} \cdot  \left( \sqrt{\mu_i} -\frac{c_1}{c} \sum_{j=1}^{i-1} \sqrt{\mu_j}\right).
\end{align*}
Noting that $\mu_j$ increases by a factory $B$ with $j$, we have for $B \geq 4$ that $\sum_{j=1}^{i-1} \sqrt{\mu_j} \leq 2 \sqrt{\mu_{i-1}} \leq 2 \sqrt{\mu_i/B}$. For $B \geq 8(c_1/c)^2$ we conclude $|\rS \cap C_i| \leq \sum_{j=1}^i \mu_j - (c/2) \sqrt{\mu_i \ln(\log_B(k))}$. Notice also that $\sum_{j=1}^i \mu_j = \E[|\rS \cap C_i|] = n\E[\er_{\rS}(h_i)] = n \er_{\cD}(h_i) = 2nB^i/k$. Thus we conclude
\[
\er_{\rS}(h_i) = |\rS \cap C_i|/n \leq \er_{\cD}(h_i) - (c/2) \cdot \frac{\sqrt{\mu_i \ln(\log_B(k))}}{n}.
\]
We required $B \geq 8(c_1/c)^2$, so let us fix $B=8(c_1/c)^2$, which is a constant. For all $i$, we also needed $\mu_i \geq c_1 \ln(\log_B(k))$ for large enough constant $c_1$. Since $\mu_i = n |D_i|/k \geq 2Bn/k$, we can fix $k = c_2 n/\ln \ln n$ for small enough constant $c_2>0$. We thus have
$
\er_{\cD}(h_i)-\er_{\rS}(h_i) \geq c \frac{\sqrt{ \mu_i \ln \ln n}}{n}
$
for a constant $c>0$. Using that $\mu_i \geq |\rS \cap D_i|/2 \geq |\rS \cap C_i|/4 = \er_{\rS}(h_i)n/4$, this finally gives us Theorem~\ref{thm:dyadic}.


\section*{Acknowledgment}
Aryeh Kontorovich is partially supported by the Israel Science and Binational Science Foundations. Kasper Green Larsen is funded by the European Union (ERC, TUCLA, 101125203). Views and opinions expressed are however those of the author(s) only and do not necessarily reflect those of the European Union or the European Research Council. Neither the European Union nor the granting authority can be held responsible for them.

\bibliography{refs.bib}
\bibliographystyle{abbrvnat}

\end{document}